\documentclass[journal]{IEEEtran}
\usepackage{amssymb,amsthm, amsmath}
\usepackage{tabularx}
\usepackage{multirow}
\usepackage{booktabs}
\usepackage{graphicx}
\usepackage{algorithm}
\usepackage{algorithmic}
\usepackage{color}

\newtheorem{theorem}{Theorem}

\newtheorem{proof of theorem}{Proof of Theorem}
\usepackage{threeparttable}
\usepackage{subfigure}
\usepackage{color}
\usepackage{float}
\usepackage{enumitem}
\usepackage{graphicx}
\usepackage{placeins}
\usepackage{capt-of}
\usepackage{amsmath}
\usepackage{booktabs}
\usepackage{multirow}
\usepackage{threeparttable}
\usepackage{xcolor}
\usepackage{etoolbox}

\makeatletter
\patchcmd{\@makecaption}
  {\\}
  {\\[4pt]}
  {}{}
\makeatother
\begin{document}
\flushbottom
\title{Learning Spatial-Spectral Refinement and Calibrating Complementary Observations for Hyperspectral Image Super-Resolution}

\author{Liqian Yang, Xingchi Chen, Xinfeng Gui, Xiangyong Cao, Qianxin Yi
	\thanks{Liqian Yang, Xingchi Chen and Qianxin Yi are with the School of Management, Zhengzhou University, Zhengzhou, 450001, China. Xinfeng Gui is with the Center for Intelligent Decision-making and Machine Learning, Xi'an Jiaotong University, Xi'an, 710049, China. Xiangyong Cao is with the School of Computer Science and Technology, Xi'an Jiaotong University, Xi'an, 710049, China.}
	\thanks{Qianxin Yi is the corresponding author. Email: yiqianxin01@163.com.}}


\maketitle

\begin{abstract}
Hyperspectral and multispectral image fusion (HMIF) aims to reconstruct a
high-resolution hyperspectral image (HR-HSI) by combining the fine spatial
details of a high-resolution multispectral image (HR-MSI) with the rich
spectral information of a low-resolution hyperspectral image (LR-HSI).
Recent advances in implicit neural representations (INRs) have enabled
flexible coordinate-based modeling for HMIF; however, existing INR-based
approaches may not fully capture fine-grained spatial structures and rich
spectral dependencies. Moreover, the LR-HSI and HR-MSI are primarily
incorporated through degradation-consistency constraints, leaving their
complementary information underexploited. To address these limitations, we
propose Two-Stage Reconstruction with Implicit Tensor Neural Representation
(TSR-ITNR), a unified self-supervised framework integrating representation
refinement and observation-guided calibration. In Stage 1, TSR-ITNR learns an
implicit Tucker representation and refines its low-rank spatial coefficient
tensor and spectral basis to better capture fine spatial structures and
interband correlations. A fixed pretrained denoiser further provides a deep
prior for the preliminary reconstruction. In Stage 2, parameter-free
calibration derives complementary and noninterfering corrections from both
observations to recover information insufficiently captured in Stage 1.
Theoretical analysis establishes the geometry-preserving property of spectral
refinement and the orthogonal complementarity of calibration. Extensive
experiments on multiple benchmark datasets demonstrate strong quantitative,
visual, and spectral reconstruction performance without ground-truth HR-HSI
supervision. Beyond conventional reconstruction metrics, we further assess
the effectiveness of TSR-ITNR using downstream semantic segmentation
accuracy.
\end{abstract}

\begin{IEEEkeywords}
Hyperspectral and multispectral image fusion, self-supervised learning, implicit neural representation, Tucker decomposition, deep image prior.
\end{IEEEkeywords}

\section{Introduction}
Hyperspectral images (HSIs) associate each spatial location with a densely sampled spectrum, thereby providing a three-dimensional description of the observed scene. Owing to this joint spatial-spectral characterization, HSIs have been widely used in remote sensing~\cite{camps2013advances},~\cite{manolakis2002detection},~\cite{cao2020hyperspectral}, geology~\cite{van2012multi}, medicine~\cite{pike2015minimum}, and food science~\cite{gowen2007hyperspectral}. However, the physical constraints of spectral imaging systems make high spatial and spectral resolutions difficult to achieve simultaneously~\cite{qu2018unsupervised},~\cite{dian2021recent}. Hyperspectral sensors acquire densely sampled spectral information but typically at limited spatial resolution, whereas multispectral images (MSIs) resolve finer spatial structures at the expense of spectral detail because they contain only a few broad bands~\cite{dian2021recent},~\cite{yokoya2017hyperspectral}. Consequently, neither modality alone contains both dense spectral information and fine spatial details. To overcome this tradeoff, hyperspectral and multispectral image fusion (HMIF) combines the rich spectral information of a low-resolution HSI (LR-HSI) with the fine spatial detail of a coregistered high-resolution MSI (HR-MSI), providing an effective solution for reconstructing a high-resolution HSI (HR-HSI) using existing imaging systems~\cite{yokoya2017hyperspectral},~\cite{yokoya2011coupled},~\cite{simoes2014convex}.

Existing HMIF approaches can be broadly categorized into three groups: low-rank representation-based methods, deep learning-based methods, and hybrid-based methods. Low-rank representation-based methods exploit spatial-spectral correlations by modeling the HR-HSI in a low-dimensional matrix or tensor subspace and incorporating handcrafted priors to regularize the reconstruction, thereby alleviating the ill-posedness of HMIF~\cite{liu2024asymptotic},~\cite{wei2015hyperspectral},~\cite{xue2021spatial},~\cite{xue2024tensor},~\cite{long2025parameter}. However, these predefined structural assumptions limit their adaptability across diverse scenes. Deep learning-based methods employ end-to-end or scene-specific networks to learn nonlinear spatial-spectral priors for HR-HSI reconstruction~\cite{dian2018deep},~\cite{chen2024cyclic},~\cite{li2025enhanced},~\cite{du2026unsupervised}. However, supervised approaches depend on costly paired training data, whereas unsupervised variants are often constrained by architectural bias and limited modeling of complex hyperspectral structures. Hybrid-based methods combine model-driven priors, such as low-rank structures and physical observation models, with deep neural networks, thereby benefiting from both structural interpretability and powerful representation learning~\cite{dian2020regularizing},~\cite{shen2021admm},~\cite{wang2023self},~\cite{shao2026hpgc}. However, these hybrid methods often incorporate structural models and neural priors through loosely coupled modules or alternating procedures, restricting their interaction and limiting the joint characterization of complex, scene-dependent spatial-spectral correlations.

Motivated by recent advances in implicit neural representations (INRs), several studies have integrated coordinate-based neural networks with low-rank tensor decompositions to construct continuous and compact representations of multidimensional signals. By replacing discrete decomposition factors with neural functions, these methods preserve explicit low-rank structures while benefiting from the nonlinear representation capability of INRs, and have achieved promising results in general multidimensional data representation and recovery~\cite{luo2023low},~\cite{cheng2026low},~\cite{vemuri2026f}. More recently, CLoRF extended continuous low-rank representations to self-supervised HSI-MSI fusion, demonstrating the potential of INR-based hybrid methods for HMIF~\cite{wang2025hyperspectral}. However, its underlying matrix factorization still unfolds the two spatial dimensions into a single mode, while the LR-HSI and HR-MSI are incorporated primarily through data-fidelity constraints. Consequently, jointly preserving the native high-order structure and fully exploiting the complementary spatial and spectral information remains challenging for INR-based hybrid methods.

To address the above limitations, we propose a novel self-supervised HMIF
framework, termed two-stage reconstruction with implicit tensor neural
representation (TSR-ITNR). TSR-ITNR first reconstructs a preliminary HR-HSI
by learning a compact implicit Tucker representation and jointly refining its
low-rank spatial coefficients and spectral basis under the data-consistency
constraints imposed by both observations. During this process, a fixed
pretrained denoiser is incorporated into the alternating optimization to
provide a deep prior. The resulting preliminary estimate is then calibrated
by exploiting the complementary information contained in the LR-HSI and
HR-MSI. The
main contributions are summarized as follows:
\begin{itemize}
    \item To balance compact global modeling and detail recovery while exploiting complementary observations, we develop a two-stage framework TSR-ITNR. Implicit Tucker representation and refinement (ITRR) integrates compact tensor modeling with spatial-spectral refinement, whereas complementary observation guided calibration (COGC) calibrates the initial reconstruction using complementary LR-HSI and HR-MSI information.

    \item We theoretically analyze attention-guided geometry-preserving spectral refiner (AGPSR) and COGC, establishing the basis independence and Gram preservation of AGPSR and the orthogonal complementarity and minimum-change property of COGC.

    \item Extensive experiments on simulated datasets demonstrate its superiority over representative methods and validate its key components, while downstream semantic segmentation experiments further confirm its practical utility.
\end{itemize}

The remainder of this article is organized as follows.
Section~\ref{sec:related_work} reviews related work on HMIF. We formulate the
HMIF problem and detail the two-stage TSR-ITNR framework in
Section~\ref{sec:proposed_method}. The theoretical properties of AGPSR and
COGC are analyzed in Section~\ref{sec:Theoretical Analysis}. Experimental
results are presented and discussed in Section~\ref{sec:experiments}, and
conclusions are drawn in Section~\ref{sec:conclusion}.

\section{RELATED WORK}
\label{sec:related_work}
\subsection{Low-Rank Representation-Based Methods}

Low-rank representation-based methods mitigate the ill-posedness of HMIF by exploiting the intrinsic low-dimensional structure of HR-HSIs and can be broadly divided into matrix- and tensor-based formulations. Matrix-based methods matricize the hyperspectral data cube and impose low-rank factorization to obtain compact representations of spatial-spectral correlations~\cite{zhang2016multispectral},~\cite{liu2020truncated}. In contrast, tensor-based methods preserve the native multiway structure of the data cube and capture correlations along its spatial and spectral modes through Tucker decomposition~\cite{dian2017hyperspectral},~\cite{li2018fusing}, canonical polyadic (CP) decomposition~\cite{kanatsoulis2018hyperspectral}, tensor singular value decomposition (t-SVD)~\cite{dian2019hyperspectral}, tensor train (TT) decomposition~\cite{dian2019learning}, and tensor ring (TR) decomposition~\cite{chen2022hyperspectral}. More recently, Xu et al.~\cite{xu2024coupled} coupled spatial and spectral tensor factors to strengthen two-dimensional spatial modeling, whereas Dian et al.~\cite{dian2024hyperspectral} extended tensor nuclear-norm regularization across multiple modes for more comprehensive correlation modeling. Despite their interpretability, these methods typically rely on handcrafted priors derived from specific data assumptions, which may generalize poorly to scenes with different spatial-spectral characteristics.

\subsection{Deep Learning-Based Methods}

To learn more expressive spatial-spectral priors, deep learning-based methods have been extensively investigated and can generally be divided into supervised and unsupervised paradigms. Supervised approaches learn end-to-end fusion mappings from paired HR-HSI, LR-HSI, and HR-MSI triplets. Among them, CNN-based methods employ hierarchical convolutions to capture local spatial-spectral features~\cite{dian2018deep},~\cite{zhang2020ssr}, whereas Transformer-based methods introduce attention mechanisms to model long-range dependencies and global interactions,~\cite{chen2024cyclic},~\cite{deng2023psrt}. Reciprocal Transformer further promotes bidirectional information exchange between the LR-HSI and HR-MSI~\cite{ma2024reciprocal}. However, supervised methods are constrained by demanding paired-data requirements and high training costs. To overcome these limitations, unsupervised methods learn scene-specific priors directly from the observed image pair without HR-HSI supervision~\cite{li2025enhanced},~\cite{uezato2020guided}. In particular, Fang et al.~\cite{fang2024cs2dips} coupled spatial and spectral DIPs to extract complementary features in parallel from the HR-MSI and LR-HSI. Although unsupervised methods eliminate the need for paired training data, their network-induced priors may not sufficiently capture the global low-rank spatial-spectral correlations of HSIs, thereby limiting reconstruction performance.

\subsection{Hybrid-Based Methods}

Hybrid-based methods combine low-rank representation-based methods with deep learning by embedding low-rank subspace or decomposition models into neural reconstruction frameworks. Explicit low-rank modeling captures global spatial-spectral correlations and constrains the solution space, while neural networks provide flexible learned priors for estimating or refining the decomposition factors~\cite{dian2020regularizing},~\cite{shen2021admm},~\cite{xie2020mhf},~\cite{li2024slrcnn},~\cite{chen2025learning}. Wang et al.~\cite{wang2025hyperspectral} represented the latent HR-HSI through continuous low-rank matrix factorization parameterized by spatial and spectral INRs.  This coupling provides a structurally constrained yet flexible framework for HMIF. However, balancing the compact global modeling of this continuous low-rank formulation with the flexible representation of fine-grained and spatially varying details remains challenging. Moreover, merely treating the degraded observations as network inputs or incorporating them via data-fidelity terms may be insufficient to fully exploit their complementary information and may introduce mutual interference during reconstruction.

To address these limitations, we propose TSR-ITNR, a self-supervised
two-stage framework for HMIF that unifies compact structural modeling,
flexible neural refinement, and explicit exploitation of observation
complementarity. In the first stage, ITRR employs an implicit Tucker
representation to preserve the native high-order structure and capture global
low-rank spatial-spectral correlations, while dedicated spatial and spectral
refiners enhance fine-grained spatial details and spectral information beyond
the compact factorization. This stage is optimized directly from the observed
LR-HSI and HR-MSI without paired HR-HSI training data and incorporates a fixed
pretrained denoiser as a deep prior, reducing reliance on handcrafted
regularizers. In the second stage, COGC derives complementary corrections from
the LR-HSI and HR-MSI to remove residual inconsistencies in the preliminary
reconstruction without mutual interference. Together, these designs combine
the structural interpretability of low-rank tensor modeling with neural
flexibility and more effective use of complementary observations.

\begin{figure*}[!t]
  \centering
  \includegraphics[width=1.0\textwidth]{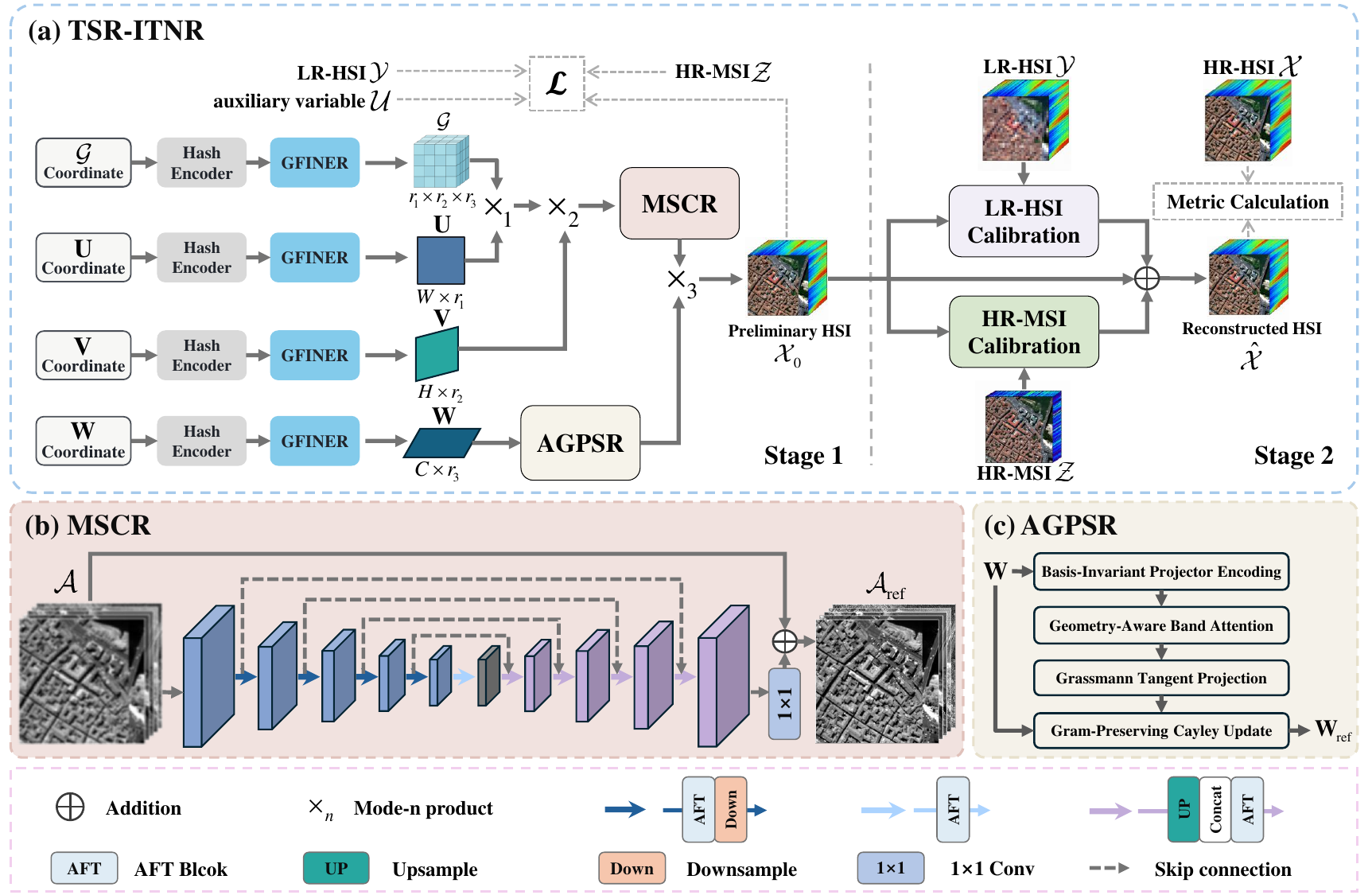}
  \caption{Overview of the proposed Two-Stage Reconstruction with implicit tensor neural representation (TSR-ITNR) framework and its key refinement modules. (a) Overall framework of TSR-ITNR, comprising two tightly coupled and complementary stages: implicit Tucker representation and refinement (ITRR) and complementary observation-guided calibration (COGC). (b) Architecture of the multi-scale spatial coefficient refiner (MSCR). (c) Architecture of the attention-guided geometry-preserving spectral refiner (AGPSR).}
  \label{fig:framework}
\end{figure*}

\section{METHODOLOGY}
\label{sec:proposed_method}
\subsection{Problem Formulation}
\subsubsection{Degradation Model}
The HMIF task aims to reconstruct an HR-HSI by fusing an LR-HSI and an HR-MSI. Let $\mathcal{Y} \in \mathbb{R}^{w \times h \times C}$ represents the LR-HSI, $\mathcal{Z} \in \mathbb{R}^{W \times H \times c}$ represents the HR-MSI, and $\mathcal{X} \in \mathbb{R}^{W \times H \times C}$ represents the HR-HSI, where $W \gg w,\ H \gg h,\ \text{and}\ C \gg c$. The LR-HSI and HR-MSI are obtained by applying spatial and spectral degradation to the HR-HSI, respectively, and their degradation models are given as:
\begin{equation}
\mathcal{Y}=\mathcal{X}\times_1\mathbf{P}_{1}\times_2\mathbf{P}_{2},\quad\mathcal{Z}=\mathcal{X}\times_3\mathbf{P}_{3},
\label{equation:1}
\end{equation}
where $\mathbf{P}_{1}\in\mathbb{R}^{w \times W}$ and $\mathbf{P}_{2} \in\mathbb{R}^{h \times H}$ are the degradation operators for blurring and downsampling in spatial width and height modes, respectively, and $\mathbf{P}_{3}\in\mathbb{R}^{c \times C}$ represents the spectral degradation matrix.

\subsubsection{Tucker-Based Low-Rank Reconstruction Formulation}
Directly recovering the high-dimensional HR-HSI from the two degraded observations is inherently ill-posed. To constrain the solution space and exploit the intrinsic multilinear low-rank structure of the latent HR-HSI, we employ a Tucker decomposition to construct a compact spatial-spectral representation:
\begin{equation}
\mathcal{X}=\mathcal{G}\times_1\mathbf{U}\times_2\mathbf{V}\times_3\mathbf{W}=\mathcal{A}\times_3\mathbf{W}\label{equation:2},
\end{equation}
where $\mathcal{G} \in \mathbb{R}^{r_1 \times r_2 \times r_3}$ represents the core tensor, while $\mathbf{U} \in \mathbb{R}^{W \times r_1}(r_1 \ll W)$, $\mathbf{V} \in \mathbb{R}^{H \times r_2}(r_2 \ll H)$ and $\mathbf{W} \in \mathbb{R}^{C \times r_3}(r_3 \ll C)$ denote the factor matrices associated with the two spatial modes and the spectral mode, respectively. In particular, $\mathcal{G}$, $\mathbf{U}$ and $\mathbf{V}$ jointly characterize the spatial structure of the latent HR-HSI and are combined to form the spatially low-rank coefficient tensor $\mathcal{A} \in \mathbb{R}^{W \times H \times r_3}$, as defined in (\ref{equation:2}). Correspondingly, $\mathbf{W}$ serves as the spectral basis matrix that maps the spatially low-rank coefficient tensor $\mathcal{A}$ from the latent spectral space to the original spectral domain.

Based on this representation, the recovery of the high-dimensional HR-HSI is reformulated as the joint estimation of the spatially low-rank coefficient tensor $\mathcal{A}$ and the spectral basis matrix $\mathbf{W}$ from the observed LR-HSI $\mathcal{Y}$ and HR-MSI $\mathcal{Z}$. Substituting (\ref{equation:2}) into the degradation models yields the following regularized optimization problem:
\begin{align}
& \underset{\mathcal{A},\mathbf{W}}{\arg\min}\;
\big\|
\mathcal{Y}
-
\mathcal{A}\times_3\mathbf{W}
\times_1 \mathbf{P}_1
\times_2 \mathbf{P}_2
\big\|_F^2+
\big\|
\mathcal{Z}
-
\mathcal{A}
\notag\\
& \quad\times_3\mathbf{W}
\times_3 \mathbf{P}_3
\big\|_F^2
+\lambda \mathcal{R}
\left(
\mathcal{A}\times_3\mathbf{W}
\right).
\label{equation:3}
\end{align}
The first and second data-fidelity terms enforce consistency of the reconstructed HR-HSI with the observed LR-HSI and HR-MSI, respectively, while $\mathcal{R}(\cdot)$ incorporates prior information to regularize the reconstruction.

\subsection{Overview of Framework}
As shown in Fig.~\ref{fig:framework}(a), TSR-ITNR organizes the reconstruction into structured representation refinement and observation calibration, implemented by ITRR and COGC, respectively. ITRR constructs and refines the implicit Tucker representation of the latent HR-HSI, whereas COGC resolves the remaining reconstruction errors through complementary corrections derived from LR-HSI and HR-MSI. This division enables representation recovery and observation consistency to be addressed in a coordinated manner. The detailed formulations of the two stages are presented in the following subsections.

\subsection{Stage 1: Implicit Tucker Representation and Refinement}

Recent studies have demonstrated that combining INRs with functional tensor decomposition enables compact and efficient representation of high-dimensional data~\cite{vemuri2026f},~\cite{zhou2026efficient}. Motivated by this progress, we develop ITRR as the first stage of TSR-ITNR. ITRR employs an implicit Tucker representation to compactly model the latent HR-HSI and refines its spatial coefficients and spectral basis to recover fine spatial details and further exploit spectral information. The reconstruction is then solved via plug-and-play half-quadratic splitting (PnP-HQS), alternating representation refinement with a fixed pretrained denoiser as a deep spatial prior. Upon convergence, ITRR yields a preliminary HR-HSI for subsequent calibration.

Specifically, four independent branches, each consisting of a hash encoder~\cite{muller2022instant} followed by a GFINER network~\cite{zhu2026finer++}, generate the Tucker core tensor and three factor matrices from their respective coordinate sets as follows:
\begin{align}
\mathcal{G}
&=
\mathcal{F}_{\mathcal{G},\Theta_1}
\left(
\mathcal{H}_{\Phi_1}
\left(\mathbf{C}_{\mathcal{G}}\right)
\right),
\notag\\
\mathbf{U}
&=
\mathcal{F}_{\mathbf{U},\Theta_2}
\left(
\mathcal{H}_{\Phi_2}
\left(\mathbf{C}_{\mathbf{U}}\right)
\right),
\notag\\
\mathbf{V}
&=
\mathcal{F}_{\mathbf{V},\Theta_3}
\left(
\mathcal{H}_{\Phi_3}
\left(\mathbf{C}_{\mathbf{V}}\right)
\right),
\notag\\
\mathbf{W}
&=
\mathcal{F}_{\mathbf{W},\Theta_4}
\left(
\mathcal{H}_{\Phi_4}
\left(\mathbf{C}_{\mathbf{W}}\right)
\right),
\label{equation:4}
\end{align}
where $\mathbf{C}_{\mathcal{G}}$, $\mathbf{C}_{\mathbf{U}}$, $\mathbf{C}_{\mathbf{V}}$, and $\mathbf{C}_{\mathbf{W}}$ denote the
coordinate sets of $\mathcal{G}$, $\mathbf{U}$, $\mathbf{V}$, and
$\mathbf{W}$, respectively. For $i=1,\ldots,4$,
$\mathcal{H}_{\Phi_i}$ and $\mathcal{F}_{\Theta_i}$ denote the
multiresolution hash encoder and GFINER network in the $i$th branch,
parameterized by $\Phi_i$ and $\Theta_i$, respectively. 

The outputs $\mathcal{G}$, $\mathbf{U}$, $\mathbf{V}$, and $\mathbf{W}$
represent the Tucker core tensor, two spatial factor matrices, and spectral
factor matrix, respectively. The Tucker core and spatial factors are combined
via mode-$1$ and mode-$2$ products to form the low-rank spatial coefficient
tensor $\mathcal{A}$, which is then refined by the multi-scale spatial
coefficient refiner (MSCR). In parallel, AGPSR refines $\mathbf{W}$ to better
capture spectral correlations. Finally, the refined spatial and spectral
components are combined to reconstruct the preliminary HR-HSI:
\begin{align}
\mathcal{X}_0
&=
\operatorname{MSCR}
\left(
\mathcal{A}
\right)\times_3
\operatorname{AGPSR}
\left(
\mathbf{W}
\right)=\mathcal{A}_{\mathrm{ref}}\times_3\mathbf{W}_{\mathrm{ref}},
\label{equation:5}
\end{align}
where $\mathcal{A}_{\mathrm{ref}}=\operatorname{MSCR}(\mathcal{A})$
and $\mathbf{W}_{\mathrm{ref}}=\operatorname{AGPSR}(\mathbf{W})$
denote the refined spatial coefficient tensor and spectral basis matrix,
respectively.

\subsubsection{Multi-scale Spatial Coefficient Refiner}

Although $\mathcal{A}$ captures compact global spatial structure through Tucker decomposition, the resulting multilinear representation may lack sufficient flexibility to model fine-grained and spatially varying details. To address this limitation, we introduce an MSCR to adaptively enhance the spatial representation of $\mathcal{A}$ while preserving its intrinsic low-rank structure.

As illustrated in Fig.~\ref{fig:framework}(b), inspired by the hierarchical
encoder-decoder paradigm and skip connections of U-Net~\cite{ronneberger2015u},
we design a four-level multiscale architecture tailored to spatial coefficient
refinement. Along the encoding path, adaptive feature transformation (AFT)
blocks extract spatial features, while successive downsampling enlarges the
receptive field to aggregate contextual information at multiple scales. Along
the decoding path, the features are progressively upsampled and fused with
their encoder counterparts through skip connections, followed by AFT blocks
to integrate multiscale context and enhance fine spatial structures. Finally,
a $1\times1$ convolution maps the decoded features back to the coefficient
space and predicts a residual correction $\Delta\mathcal{A}$. The refined
spatial coefficient tensor is formulated as:
\begin{equation}
\mathcal{A}_{\mathrm{ref}}
=
\operatorname{MSCR}(\mathcal{A})
=
\mathcal{A}+\Delta\mathcal{A}.
\label{equation:mscr}
\end{equation}

The AFT block and its embedded CAM are illustrated in
Fig.~\ref{fig:AFT and CAM}(a) and (b), respectively. Given an input feature
$\mathbf{F}$, the main branch of AFT applies two $3\times3$ convolutions,
each followed by group normalization, with a ReLU activation inserted after
the first convolutional layer. The resulting features are recalibrated by
channel attention mechanism (CAM) and added to a $1\times1$ convolutional shortcut, followed by ReLU
activation. Within CAM, channel weights are generated through pooling, two
fully connected layers with an intermediate ReLU, and a sigmoid function,
and are then applied to the input features by element-wise multiplication.
Embedded at multiple scales in MSCR, AFT enhances the spatial coefficient
tensor $\mathcal{A}$ with contextual information and fine structural details.

\begin{figure}[!tp]
  \centering
  \includegraphics[width=0.4\textwidth]{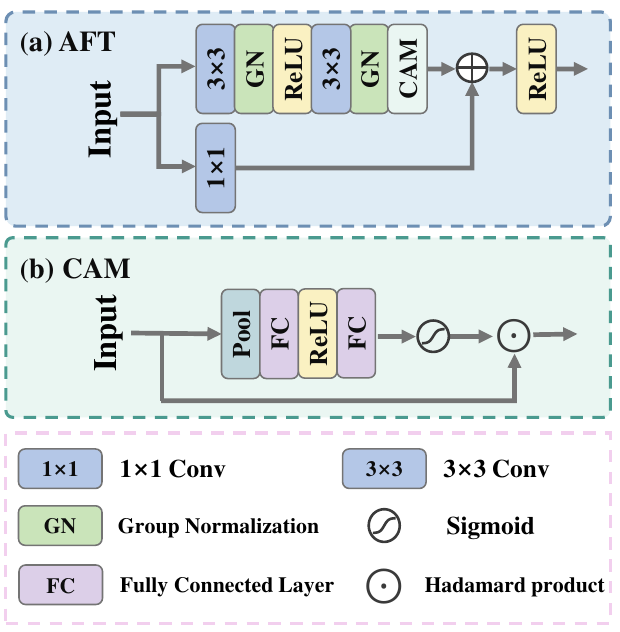}
  \caption{Detailed architectures of the adaptive feature transformation (AFT) block employed in MSCR and its embedded channel attention mechanism (CAM): (a) AFT block and (b) CAM block.}\label{fig:AFT and CAM}
\end{figure}
\subsubsection{Attention-guided Geometry-preserving Spectral Refiner}

The purpose of AGPSR is to enhance spectral dependency modeling without
making the refinement sensitive to the arbitrary basis used to represent the
spectral subspace or distorting the geometry of the spectral factor. Assuming
that $\mathbf{W}\in\mathbb{R}^{C\times r_3}$ has full column rank, we compute
a thin QR factorization $\mathbf{W}=\mathbf{Q}\mathbf{R}$ and construct
$\boldsymbol{\Pi}_{\mathbf{W}}
=
\mathbf{Q}\mathbf{Q}^{\top}$.

Since the orthonormal basis $\mathbf{Q}$ is not unique, constructing band
tokens directly from it may introduce basis ambiguity. AGPSR therefore derives
the tokens from the rows of the basis-invariant projector
$\boldsymbol{\Pi}_{\mathbf{W}}$, ensuring that the attention weights are
determined by the underlying spectral-subspace structure rather than by a
particular basis representation.

Let $\mathbf{Z}$ denote the band-token matrix formed by jointly encoding the
rows of $\boldsymbol{\Pi}_{\mathbf{W}}$ and their corresponding fixed spectral
positional embeddings. The geometry-aware attention matrix of the $h$th head is computed as
\begin{equation}
\mathbf{A}^{(h)}
=
\operatorname{softmax}\!\left(
\frac{
\mathbf{Z}_{q}^{(h)}
\bigl(\mathbf{Z}_{k}^{(h)}\bigr)^{\top}
}{
\sqrt{d_h}
}
-\beta_h\mathbf{D}^{\mathrm{sp}}
-\gamma_h\mathbf{D}^{\mathrm{geo}}
\right),
\label{eq:agpsr-band-attention}
\end{equation}
where $\mathbf{Z}_{q}^{(h)}$ and $\mathbf{Z}_{k}^{(h)}$ denote the query and
key representations of the $h$th head, respectively, and $d_h$ is their
feature dimension. The matrices $\mathbf{D}^{\mathrm{sp}}$ and
$\mathbf{D}^{\mathrm{geo}}$ contain pairwise distances between spectral
positions and between the corresponding projector rows, respectively. The
head-specific nonnegative parameters $\beta_h$ and $\gamma_h$ control the
strengths of these distance biases. Consequently, different
heads can capture complementary inter-band dependencies. Their responses are
then aggregated to form the attention-guided proposal:
\begin{equation}
\mathbf{S}
=
\sum_{h=1}^{N_{\mathrm{att}}}
\rho_h\mathbf{A}^{(h)}\mathbf{Q},
\label{eq:agpsr-proposal}
\end{equation}
where $N_{\mathrm{att}}$ denotes the number of attention heads and $\rho_h$
is the normalized fusion weight of the $h$th head. Since the attention
matrices are constructed from the basis-invariant projector, the learned
inter-band attention patterns are independent of any particular orthonormal
basis representation. The resulting $\mathbf{S}$ serves as an
attention-guided proposal for the subsequent Grassmann tangent projection.

The attention-guided proposal $\mathbf{S}$ may contain an in-subspace
component that only changes the basis representation without refining the
spectral subspace. To isolate the effective subspace-changing direction, we
project $\mathbf{S}$ onto the orthogonal complement of the current subspace:
\begin{equation}
\mathbf{T}
=
\left(
\mathbf{I}_C-\boldsymbol{\Pi}_{\mathbf{W}}
\right)\mathbf{S}.
\end{equation}
The resulting $\mathbf{T}$ satisfies
$\mathbf{Q}^{\top}\mathbf{T}=\mathbf{0}$ and defines a valid Grassmann tangent
direction. However, directly applying an additive update along this direction
may alter the scales and mutual inner products of the spectral basis vectors.
We therefore convert $\mathbf{T}$ into a skew-symmetric generator:
\begin{equation}
\mathbf{K}
=
\mathbf{T}\mathbf{Q}^{\top}
-
\mathbf{Q}\mathbf{T}^{\top},
\label{eq:agpsr-generator}
\end{equation}
and update the spectral factor through the Cayley transform:
\begin{equation}
\mathbf{C}_{\alpha}
=
\left(
\mathbf{I}_{C}-\frac{\alpha}{2}\mathbf{K}
\right)^{-1}
\left(
\mathbf{I}_{C}+\frac{\alpha}{2}\mathbf{K}
\right),
\;
\mathbf{W}_{\mathrm{ref}}
=
\mathbf{C}_{\alpha}\mathbf{W},
\label{eq:agpsr-cayley}
\end{equation}
where $\alpha$ controls the update magnitude. Since
$\mathbf{K}^{\top}=-\mathbf{K}$, the Cayley transform
$\mathbf{C}_{\alpha}$ is orthogonal, and therefore
\begin{equation}
\mathbf{W}_{\mathrm{ref}}^{\top}\mathbf{W}_{\mathrm{ref}}
=
\mathbf{W}^{\top}\mathbf{W}.
\end{equation}
Consequently, AGPSR converts the attention-guided proposal into a Grassmann
tangent update of the spectral subspace. As established in
Theorem~\ref{thm:agpsr}, the resulting refinement is independent of the
orthonormal basis chosen for $\operatorname{range}(\mathbf{W})$ and exactly
preserves $\mathbf{W}^{\top}\mathbf{W}$, and hence all column inner products
and singular values of $\mathbf{W}$.

\subsubsection{Optimization via PnP-HQS}

Following the spatial and spectral refinements in \eqref{equation:5}, we
replace $\mathcal{A}$ and $\mathbf{W}$ in the general HMIF formulation
\eqref{equation:3} with $\mathcal{A}_{\mathrm{ref}}$ and
$\mathbf{W}_{\mathrm{ref}}$, respectively. The resulting optimization problem
for preliminary HR-HSI reconstruction is formulated as:
\begin{align}
&\underset{\mathcal{A}_{\mathrm{ref}},\,\mathbf{W}_{\mathrm{ref}}}
{\operatorname{arg\,min}}\;
\big\|
\mathcal{Y}
-\mathcal{A}_{\mathrm{ref}}
\times_3\mathbf{W}_{\mathrm{ref}}
\times_1\mathbf{P}_1
\times_2\mathbf{P}_2
\big\|_F^2
+
\big\|\mathcal{Z}-
\notag\\[-1mm]
&\qquad
\mathcal{A}_{\mathrm{ref}}
\times_3\mathbf{W}_{\mathrm{ref}}
\times_3\mathbf{P}_3
\big\|_F^2
+
\lambda\mathcal{R}\!\left(
\mathcal{A}_{\mathrm{ref}}
\times_3\mathbf{W}_{\mathrm{ref}}
\right).
\label{equation:6}
\end{align}
We solve \eqref{equation:6} via PnP-HQS, combining the observation-adaptive
representation learned by the self-supervised network with a deep prior from
a fixed pretrained denoiser. Let $\mathcal{X}_0$ denote the preliminary
HR-HSI in \eqref{equation:5}. Introducing an auxiliary variable $\mathcal{U}$
to decouple data fidelity and regularization yields the equivalent constrained
problem:
\begin{align}
&\underset{\mathcal{X}_0,\,\mathcal{U}}
{\operatorname{arg\,min}}\;
\big\|
\mathcal{Y}
-\mathcal{X}_0
\times_1\mathbf{P}_1
\times_2\mathbf{P}_2
\big\|_F^2
+
\big\|
\mathcal{Z}
-\mathcal{X}_0
\times_3
\notag\\[-1mm]
&\qquad
\mathbf{P}_3
\big\|_F^2
+
\lambda\mathcal{R}(\mathcal{U}),
\qquad
\text{s.t.}\quad
\mathcal{U}=\mathcal{X}_0.
\label{equation:8}
\end{align}

The constrained problem (\ref{equation:8}) can be transformed into the following problem:
\begin{align}
&\underset{\mathcal{X}_0,\,\mathcal{U}}
{\operatorname{arg\,min}}\;
\big\|
\mathcal{Y}
-\mathcal{X}_0
\times_1\mathbf{P}_1
\times_2\mathbf{P}_2
\big\|_F^2
+
\big\|
\mathcal{Z}
-\mathcal{X}_0
\times_3
\notag\\[-1mm]
&\qquad
\mathbf{P}_3
\big\|_F^2
+
\lambda\mathcal{R}(\mathcal{U}) +
\frac{\beta}{2}
\big\|
\mathcal{U}-\mathcal{X}_0
\big\|_F^2,
\label{equation:9}
\end{align}
where $\beta>0$ is the penalty parameter. Problem \eqref{equation:9} is then
solved by alternating between the $\mathcal{X}_0$- and
$\mathcal{U}$-subproblems.
\paragraph{$\mathcal{X}_0$-subproblem}
The optimization of $\mathcal{X}_0$ is formulated as follows:
\begin{align}
&\underset{\mathcal{X}_0}
{\operatorname{arg\,min}}\;
\big\|
\mathcal{Y}
-\mathcal{X}_0
\times_1\mathbf{P}_1
\times_2\mathbf{P}_2
\big\|_F^2
+
\big\|
\mathcal{Z}
-\mathcal{X}_0
\times_3
\notag\\[-1mm]
&\qquad
\mathbf{P}_3
\big\|_F^2
+
\frac{\beta}{2}
\big\|
\mathcal{U}-\mathcal{X}_0
\big\|_F^2.
\label{equation:10}
\end{align}
We employ Adam optimizer~\cite{kingma2014adam} to solve $\mathcal{X}_0$-subproblem, and all three terms in (\ref{equation:10}) are used as the loss function for ITRR. 

\paragraph{$\mathcal{U}$-subproblem}
The optimization of $\mathcal{U}$ subproblem is formulated as follows:
\begin{align}
&\underset{\mathcal{U}}
{\operatorname{arg\,min}}\;
\lambda\mathcal{R}(\mathcal{U}) +
\frac{\beta}{2}
\big\|
\mathcal{U}-\mathcal{X}_0
\big\|_F^2.
\label{equation:11}
\end{align}
The $\mathcal{U}$ subproblem in \eqref{equation:11} can be viewed as Gaussian
denoising with a noise level of $\sigma=\sqrt{\lambda/\beta}$. Handcrafted
priors rely on predefined assumptions and lack the flexibility to characterize
complex and spatially varying image structures, potentially limiting
fine-detail recovery. We therefore employ a pretrained
DRUNet~\cite{zhang2021plug} as the PnP denoiser to introduce a more expressive
deep spatial prior.

To accommodate variations in the dynamic ranges of different spectral bands,
each band of the current HR-HSI estimate is first normalized to $[0,1]$.
Band-wise denoising is then performed using a fixed pre-trained DRUNet, with
the denoising strength adjusted according to the normalization scale of each
band. Finally, the denoised bands are restored to their original ranges and
reassembled to update the auxiliary variable $\mathcal{U}$.

We summarize the optimization process in Algorithm \ref{alg:HMIF}.

\begin{algorithm}[H]
\caption{ITRR-Based HMIF}
\label{alg:HMIF}
{\leftskip=1.3em\noindent\textbf{Require:} LR-HSI $\mathcal{Y}$, HR-MSI $\mathcal{Z}$, degradation operators $\mathbf{P}_{1}$, $\mathbf{P}_{2}$ and $\mathbf{P}_{3}$, regularization weight $\lambda$, penalty parameter $\beta$, and penalty growth factor $\mu$.\par}
\begin{enumerate}[label=\arabic*:, leftmargin=0pt, labelindent=0pt, itemindent=*]
\item \textbf{while} not converged \textbf{do}
\item \hspace{2em}$k = k + 1$.
\item \hspace{2em}Update $\mathcal{X}_0$ via ITRR in (\ref{equation:10}).
\item \hspace{2em}Update $\mathcal{U}$ via (\ref{equation:11}).
\item \hspace{2em}Update $\beta^{(k+1)} = \mu * \beta^{(k)}$.
\item \hspace{2em}\begin{minipage}[t]{\dimexpr\linewidth-2em\relax}
    Check the convergence condition: \scalebox{0.85}{$\tfrac{\|\mathcal{X}_0^{(k)} - \mathcal{X}_0^{(k-1)}\|_F}{\|\mathcal{X}_0^{(k-1)}\|_F} \leq \varepsilon$} \\
    and $k < k_{\max}$.
    \end{minipage}
\item \textbf{end while}
\end{enumerate}
{\leftskip=1.3em\noindent\textbf{Ensure:} The preliminary reconstructed HR-HSI $\mathcal{X}_0$.\par}
\end{algorithm}
\subsection{Stage 2: Complementary Observation-Guided Calibration}

Although ITRR yields a preliminary HR-HSI $\mathcal{X}_0$, the complementary
residual information in the two observations may not be fully recovered.
As illustrated in Stage 2 of Fig.~\ref{fig:framework}(a), COGC retains
$\mathcal{X}_0$ through a direct shortcut and converts the residuals of the
LR-HSI $\mathcal{Y}$ and HR-MSI $\mathcal{Z}$ into two targeted corrections
using the known degradation operators.

Let $\dagger$ denote the Moore--Penrose pseudoinverse. For compact notation,
we define
\begin{equation}
\boldsymbol{\Pi}_i
=
\mathbf{P}_i^{\dagger}\mathbf{P}_i,
\quad i=1,2,3,
\qquad
\mathbf{N}_3
=
\mathbf{I}_C-\boldsymbol{\Pi}_3,
\label{eq:cogc-projectors}
\end{equation}
where $\boldsymbol{\Pi}_i$ retains the components observable through
$\mathbf{P}_i$, while $\mathbf{N}_3$ extracts the spectral components
unobservable to the HR-MSI. Given $\mathcal{X}_0$, the residuals in the two
observation domains are defined as
\begin{equation}
\mathcal{E}_Y
=
\mathcal{Y}
-\mathcal{X}_0
\times_1\mathbf{P}_1
\times_2\mathbf{P}_2,
\quad
\mathcal{E}_Z
=
\mathcal{Z}
-\mathcal{X}_0
\times_3\mathbf{P}_3.
\label{eq:cogc-residuals}
\end{equation}

The HR-MSI branch lifts $\mathcal{E}_Z$ to the hyperspectral domain through
$\mathbf{P}_3^{\dagger}$ to correct spatially detailed components supported
by the HR-MSI. In parallel, the LR-HSI branch spatially back-projects
$\mathcal{E}_Y$ through $\mathbf{P}_1^{\dagger}$ and
$\mathbf{P}_2^{\dagger}$, and then restricts the correction using
$\mathbf{N}_3$ to complement the spectral information unavailable in the
HR-MSI. The two corrections are formulated as
\begin{equation}
\Delta\mathcal{X}_Y
=
\mathcal{E}_Y
\times_1\mathbf{P}_1^{\dagger}
\times_2\mathbf{P}_2^{\dagger}
\times_3\mathbf{N}_3,
\quad
\Delta\mathcal{X}_Z
=
\mathcal{E}_Z
\times_3\mathbf{P}_3^{\dagger}.
\label{eq:cogc-corrections}
\end{equation}

The final HR-HSI is obtained by adding both corrections to the direct-path
estimate:
\begin{equation}
\widehat{\mathcal{X}}
=
\mathcal{X}_0
+\Delta\mathcal{X}_Y
+\Delta\mathcal{X}_Z.
\label{eq:cogc-T10}
\end{equation}
By assigning the two observation-driven corrections to complementary
spectral components, COGC reduces redundant or interfering calibration.
Its complementarity and observation-consistency properties are analyzed in
Theorem~\ref{property of COGC}. Since COGC contains no learnable parameters,
it is applied after ITRR convergence without participating in the Stage-1
loss or backpropagation.

\section{Theoretical Analysis}
\label{sec:Theoretical Analysis}
\subsection{Basis Independence and Gram Preservation of AGPSR}
\label{sec:agpsr-theory}
Since a spectral subspace admits multiple orthonormal bases, AGPSR should be
independent of the particular basis representation while preserving the
intrinsic geometry of $\mathbf{W}$. The following theorem establishes these
properties.
\begin{theorem}[Basis Invariance and Gram Preservation]
\label{thm:agpsr}
\leavevmode\par\indent
For the AGPSR update defined in
\eqref{eq:agpsr-band-attention}--\eqref{eq:agpsr-cayley},
the following statements hold.
\begin{enumerate}

\item[(i)]
\(\mathbf{Q}^{\top}\mathbf{T}=\mathbf{0}\) and
\(\mathbf{K}\mathbf{Q}=\mathbf{T}\). Hence \(\mathbf{T}\) is a horizontal
tangent direction of the Grassmann manifold at
\(\operatorname{range}(\mathbf{Q})\).

\item[(ii)]
For every orthogonal \(\mathbf{O}\), replacing
\((\mathbf{Q},\mathbf{R})\) by
\((\mathbf{Q}\mathbf{O},\mathbf{O}^{\top}\mathbf{R})\) leaves
\(\boldsymbol{\Pi}_{\mathbf{W}}\), \(\mathbf{K}\),
\(\mathbf{C}_{\alpha}\), and \(\mathbf{W}_{\mathrm{ref}}\) unchanged.
Thus the update is independent of the orthonormal basis used to represent
the spectral subspace.

\item[(iii)]
The two matrices in the Cayley transform are nonsingular for every real
\(\alpha\), and \(\mathbf{C}_{\alpha}\) is orthogonal. Therefore
\begin{equation}
\mathbf{W}_{\mathrm{ref}}^{\top}\mathbf{W}_{\mathrm{ref}}
=
\mathbf{W}^{\top}\mathbf{W}.
\label{eq:agpsr-T5}
\end{equation}
Consequently, all column inner products, singular values, rank, spectral
norm, Frobenius norm, and the two-norm condition number of \(\mathbf{W}\)
are preserved.

\end{enumerate}
\end{theorem}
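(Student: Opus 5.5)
The three parts are handled in order, using only the definitions in \eqref{eq:agpsr-band-attention}--\eqref{eq:agpsr-cayley} and $\mathbf{Q}^{\top}\mathbf{Q}=\mathbf{I}_{r_3}$.

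For (i), the key identity is $\mathbf{Q}^{\top}\boldsymbol{\Pi}_{\mathbf{W}}=\mathbf{Q}^{\top}\mathbf{Q}\mathbf{Q}^{\top}=\mathbf{Q}^{\top}$, so $\mathbf{Q}^{\top}\mathbf{T}=(\mathbf{Q}^{\top}-\mathbf{Q}^{\top})\mathbf{S}=\mathbf{0}$. Then
\[
\mathbf{K}\mathbf{Q}=\mathbf{T}\mathbf{Q}^{\top}\mathbf{Q}-\mathbf{Q}\mathbf{T}^{\top}\mathbf{Q}=\mathbf{T}-\mathbf{Q}(\mathbf{Q}^{\top}\mathbf{T})^{\top}=\mathbf{T}.
\]
The Grassmann statement follows from the standard description of the horizontal space at $\operatorname{range}(\mathbf{Q})$ as $\{\boldsymbol{\Delta}:\mathbf{Q}^{\top}\boldsymbol{\Delta}=\mathbf{0}\}$.

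For (ii), I track how each object transforms under $\mathbf{Q}\mapsto\mathbf{Q}\mathbf{O}$.
\begin{itemize}
\item $\boldsymbol{\Pi}_{\mathbf{W}}=\mathbf{Q}\mathbf{O}\mathbf{O}^{\top}\mathbf{Q}^{\top}$ is unchanged, and $\mathbf{W}=\mathbf{Q}\mathbf{O}\mathbf{O}^{\top}\mathbf{R}$ is unchanged.
\item The band tokens $\mathbf{Z}$, the distances $\mathbf{D}^{\mathrm{geo}}$ (between projector rows), $\mathbf{D}^{\mathrm{sp}}$, and hence every $\mathbf{A}^{(h)}$ and $\rho_h$ depend only on $\boldsymbol{\Pi}_{\mathbf{W}}$ and fixed positional data, so they are unchanged. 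This is the one point needing care: I will state explicitly that the network inputs are functions of $\boldsymbol{\Pi}_{\mathbf{W}}$ alone, as the construction specifies.
\item Consequently $\mathbf{S}\mapsto\mathbf{S}\mathbf{O}$ and $\mathbf{T}\mapsto\mathbf{T}\mathbf{O}$.
\item Therefore $\mathbf{K}\mapsto\mathbf{T}\mathbf{O}\mathbf{O}^{\top}\mathbf{Q}^{\top}-\mathbf{Q}\mathbf{O}\mathbf{O}^{\top}\mathbf{T}^{\top}=\mathbf{K}$.
\end{itemize}
Since $\mathbf{K}$ is unchanged, so are $\mathbf{C}_{\alpha}$ and $\mathbf{W}_{\mathrm{ref}}=\mathbf{C}_{\alpha}\mathbf{W}$.

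For (iii), $\mathbf{K}^{\top}=\mathbf{Q}\mathbf{T}^{\top}-\mathbf{T}\mathbf{Q}^{\top}=-\mathbf{K}$, so $\mathbf{K}$ is skew-symmetric. Its eigenvalues are therefore purely imaginary, and hence $1\pm\frac{\alpha}{2}\lambda\neq0$ for every real $\alpha$. Equivalently, $x^{\top}(\mathbf{I}\pm\frac{\alpha}{2}\mathbf{K})x=\|x\|^2$, so both matrices are injective and thus nonsingular.

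Write $\mathbf{M}_\pm=\mathbf{I}\pm\frac{\alpha}{2}\mathbf{K}$. These are polynomials in $\mathbf{K}$, so they commute, and $\mathbf{M}_\pm^{\top}=\mathbf{M}_\mp$. Using the commutation of $\mathbf{M}_+$ with $\mathbf{M}_-^{-1}$,
\[
\mathbf{C}_\alpha^{\top}\mathbf{C}_\alpha=\mathbf{M}_+^{\top}\mathbf{M}_-^{-\top}\mathbf{M}_-^{-1}\mathbf{M}_+=\mathbf{M}_-\mathbf{M}_+^{-1}\mathbf{M}_-^{-1}\mathbf{M}_+=\mathbf{I}.
\]
This gives \eqref{eq:agpsr-T5}. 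The remaining invariants follow from the Gram matrix:
\begin{itemize}
\item column inner products are the entries of $\mathbf{W}^{\top}\mathbf{W}$;
\item singular values are the square roots of the eigenvalues of $\mathbf{W}^{\top}\mathbf{W}$;
\item rank, spectral norm, Frobenius norm ($\sqrt{\operatorname{tr}(\mathbf{W}^{\top}\mathbf{W})}$) and the condition number $\sigma_{\max}/\sigma_{\min}$, which is finite by the full-column-rank assumption, are all determined by the singular values.
\end{itemize}

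The only real obstacle is the basis-independence of the attention stage in (ii). The algebra there is trivial, but the argument must rest on the modeling assumption that every learned component sees $\mathbf{Q}$ only through $\boldsymbol{\Pi}_{\mathbf{W}}$, with the single explicit dependence on $\mathbf{Q}$ entering through the final right multiplication in \eqref{eq:agpsr-proposal}.
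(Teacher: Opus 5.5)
Your proposal is correct and follows essentially the same route as the paper's proof. The shared steps are the projector identity for (i); the tracking $\mathbf{S}\mapsto\mathbf{S}\mathbf{O}$, $\mathbf{T}\mapsto\mathbf{T}\mathbf{O}$ for (ii), resting on the same assumption that the attention stage sees $\mathbf{Q}$ only through $\boldsymbol{\Pi}_{\mathbf{W}}$; and, for (iii), the quadratic-form nonsingularity argument plus commutation of $\mathbf{M}_{\pm}$, where your direct computation of $\mathbf{C}_\alpha^{\top}\mathbf{C}_\alpha=\mathbf{I}$ and the eigenvalue remark are only cosmetic variations.
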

Theorem~\ref{thm:agpsr} characterizes three key properties of the AGPSR
update. First, $\mathbf{T}$ is a valid horizontal Grassmann tangent direction,
ensuring that AGPSR refines the spectral subspace rather than merely
reparameterizing its basis. Second, the update remains invariant to the choice
of orthonormal basis and therefore depends only on the represented subspace.
Finally, the Cayley transform yields an orthogonal update that exactly
preserves $\mathbf{W}^{\top}\mathbf{W}$ and the associated geometric
properties of $\mathbf{W}$. Overall, AGPSR provides an attention-guided
spectral refinement that is both basis-independent and geometry-preserving.
\subsection{Orthogonal Complementarity and Minimum-Change Property of COGC}
COGC is designed to exploit the two observations through complementary rather
than overlapping corrections. Under the observation degradation model in (\ref{equation:1}), the following theorem establishes that its two branches recover
orthogonal identifiable components and jointly yield the closest reconstruction
to $\mathcal{X}_0$ satisfying both observations, without requiring rank
assumptions on the degradation operators.
\begin{table*}[!t]
  \centering
  \caption{Quantitative metrics of the compared approaches on the Pavia dataset}
  \label{tab:Pavia_results}
  {\small
  \setlength{\tabcolsep}{4pt}
  \renewcommand{\arraystretch}{1.05}
  \begin{tabular}{*{13}{c}}
    \toprule
\multirow[c]{2}{*}{\raisebox{-1.0ex}{Dataset}}
& \multirow[c]{2}{*}{\raisebox{-1.0ex}{Index}}
    & \multicolumn{2}{c}{Low Rank}
    & \multicolumn{3}{c}{Deep Learning}
    & \multicolumn{6}{c}{Hybrid} \\
    \cmidrule(lr){3-4}
    \cmidrule(lr){5-7}
    \cmidrule(lr){8-13}
    && CTDF & GTNN & CAFE+ & EDIP & CS2DIPs & DELTA & CNN-FUS & LRTFR & CLoRF & SSLRDN & TSR-ITNR \\
    \midrule
    \multirow[c]{4}{*}{Pavia Centre}
      & PSNR$\uparrow$
      & 44.85 & 45.24 & 42.65& 44.57 & \underline{45.88} & 41.22 & 44.90 & 40.63 & 44.45 & 45.41 & \textbf{46.36} \\

      & SSIM$\uparrow$
      & 0.983 & \underline{0.984} & 0.975 & 0.982 & \textbf{0.985} & 0.971 
      & \underline{0.984} & 0.969 & 0.983
      & \textbf{0.985} & \textbf{0.985} \\

      & SAM$\downarrow$
      & 0.072 & 0.064 & 0.078 & 0.064 & \underline{0.060} & 0.093 & 0.065 & 0.104 
      & 0.065 & 0.061 & \textbf{0.058} \\

      & ERGAS$\downarrow$
      & 38.25 & 34.63 & 42.94 & 36.07 & \underline{32.81} & 51.08 & 35.86 & 52.08
      & 36.18 & 33.94 & \textbf{32.55} \\
    \bottomrule
  \end{tabular}}

  \par\vspace{0.3\baselineskip}
  \centering
  \caption{Quantitative metrics of the compared approaches on the CAVE dataset}
  \label{tab:CAVE_results}
  {\small
  \setlength{\tabcolsep}{4.5pt}
  \renewcommand{\arraystretch}{1.05}
  \begin{tabular}{*{13}{c}}
    \toprule
\multirow[c]{2}{*}{\raisebox{-1.0ex}{Dataset}}& \multirow[c]{2}{*}{\raisebox{-1.0ex}{Index}}
    & \multicolumn{2}{c}{Low Rank}
    & \multicolumn{3}{c}{Deep Learning}
    & \multicolumn{6}{c}{Hybrid} \\
    \cmidrule(lr){3-4}
    \cmidrule(lr){5-7}
    \cmidrule(lr){8-13}
    && CTDF & GTNN & CAFE+ & EDIP & CS2DIPs & DELTA & CNN-FUS & LRTFR & CLoRF & SSLRDN & TSR-ITNR \\
    \midrule
        \multirow[c]{4}{*}{Balloons}
      & PSNR$\uparrow$
      & 46.78 & 47.50 & 45.29 & 47.88 & 44.65
      & 43.52 & 43.41 & 44.64 & 47.21
      & \underline{49.25} & \textbf{49.80} \\

      & SSIM$\uparrow$
      & 0.994 & \underline{0.995} & 0.986 & \underline{0.995} & 0.992
      & 0.981 & 0.989 & 0.989 & \underline{0.995}
      & \textbf{0.996} & \textbf{0.996} \\

      & SAM$\downarrow$
      & 0.043 & 0.042 & 0.061 & 0.038 & 0.068
      & 0.106 & 0.067 & 0.058 & 0.040
      & \underline{0.036} & \textbf{0.033} \\

      & ERGAS$\downarrow$
      & 24.36 & 19.81 & 25.89 & 18.01 & 27.07
      & 35.88 & 39.62 & 25.80 & \textbf{15.66}
      & 20.43 & \underline{15.73} \\

    \midrule
    \multirow[c]{4}{*}{Peppers}
      & PSNR$\uparrow$
      & 46.02 & 45.12 & 44.09 & 44.07 & 43.28
      & 40.74 & 45.01 & 44.85 & 46.49
      & \underline{46.89} & \textbf{47.77} \\

      & SSIM$\uparrow$
      & 0.990 & 0.991 & 0.985 & \underline{0.995} & 0.982
      & 0.957 & 0.984 & 0.988 & 0.994
      & \underline{0.995} & \textbf{0.996} \\

      & SAM$\downarrow$
      & 0.077 & 0.080 & 0.093 & \underline{0.049} & 0.122
      & 0.257 & 0.119 & 0.095 & 0.064
      & 0.058 & \textbf{0.047} \\

      & ERGAS$\downarrow$
      & 41.47 & 40.89 & 43.73 & 44.44 & 50.95
      & 66.96 & 44.76 & 40.85 & 38.62
      & \underline{32.30} & \textbf{31.46} \\

    \midrule
    \multirow[c]{4}{*}{Lemons}
      & PSNR$\uparrow$
      & 48.98 & 49.61 & 47.56 & 49.86 & 48.14
      & 45.59 & 45.66 & 47.89 & 49.33
      & \underline{51.94} & \textbf{52.69} \\

      & SSIM$\uparrow$
      & 0.994 & 0.995 & 0.990 & 0.995 & 0.992
      & 0.967 & 0.971 & 0.992 & 0.994
      & \underline{0.996} & \textbf{0.997} \\

      & SAM$\downarrow$
      & 0.047 & 0.053 & 0.072 & 0.044 & 0.072
      & 0.231 & 0.158 & 0.061 & 0.049
      & \underline{0.039} & \textbf{0.034} \\

      & ERGAS$\downarrow$
      & 28.08 & 23.20 & 26.60 & 20.74 & 28.73
      & 52.33 & 42.52 & 25.75 & 23.67
      & \underline{17.18} & \textbf{16.35} \\

    \midrule
    \multirow[c]{4}{*}{Sponges}
      & PSNR$\uparrow$
      & 43.01 & 45.01 & 43.33 & 44.14 & 41.70
      & 41.65 & 40.29 & 43.35 & 45.16
      & \underline{45.48} & \textbf{46.90} \\

      & SSIM$\uparrow$
      & 0.986 & 0.989 & 0.980 & 0.990 & 0.984
      & 0.957 & 0.968 & 0.985 & 0.989
      & \underline{0.992} & \textbf{0.993} \\

      & SAM$\downarrow$
      & 0.049 & 0.034 & 0.050 & \underline{0.030} & 0.105
      & 0.296 & 0.088 & 0.038 & 0.033
      & 0.033 & \textbf{0.027} \\

      & ERGAS$\downarrow$
      & 28.72 & 22.57 & 27.26 & 22.02 & 36.92
      & 63.39 & 41.30 & 24.67 & 20.92
      & \underline{19.41} & \textbf{18.38} \\

    \midrule
    \multirow[c]{4}{*}{Clay}
      & PSNR$\uparrow$
      & 46.64 & 48.40 & 46.80 & 47.61 & 43.67
      & 41.71 & 44.53 & 47.09 & 47.74
      & \underline{49.30} & \textbf{50.53} \\

      & SSIM$\uparrow$
      & 0.987 & 0.993 & 0.987 & 0.992 & 0.949
      & 0.961 & 0.970 & 0.989 & 0.992
      & \underline{0.995} & \textbf{0.996} \\

      & SAM$\downarrow$
      & 0.109 & 0.105 & 0.113 & 0.085 & 0.434
      & 0.220 & 0.202 & 0.110 & 0.092
      & \underline{0.080} & \textbf{0.068} \\

      & ERGAS$\downarrow$
      & 36.67 & 29.68 & 34.12 & 30.62 & 76.61
      & 57.02 & 47.98 & 32.12 & 30.34
      & \underline{25.34} & \textbf{23.52} \\
    \bottomrule
  \end{tabular}
}
\end{table*}
\begin{theorem}[Orthogonal Complementarity and Minimality]\label{property of COGC}
\begin{itshape}
\leavevmode\par\indent
Set $\mathcal{E}_0=\mathcal{X}-\mathcal{X}_0$ and
$\mathcal{E}_{0,s}=\mathcal{E}_0\times_1\boldsymbol{\Pi}_1
\times_2\boldsymbol{\Pi}_2$.
No rank assumption on $\mathbf{P}_1$, $\mathbf{P}_2$, or $\mathbf{P}_3$ is
required. Then:
\begin{enumerate}
    \item[(i)]
    The two branches recover the exact identifiable components
    \begin{equation}
        \Delta\mathcal{X}_Z
        =
        \mathcal{E}_0\times_3\boldsymbol{\Pi}_3,
        \qquad
        \Delta\mathcal{X}_Y
        =
        \mathcal{E}_{0,s}\times_3\mathbf{N}_3.
        \label{eq:cogc-T12}
    \end{equation}

    \item[(ii)]
    The spectral fibers of $\Delta\mathcal{X}_Z$ lie in
    $\operatorname{range}(\mathbf{P}_3^{\top})$, whereas those of
    $\Delta\mathcal{X}_Y$ lie in $\ker(\mathbf{P}_3)$. Consequently,
    \begin{equation}
        \left\langle
            \Delta\mathcal{X}_Y,
            \Delta\mathcal{X}_Z
        \right\rangle_F
        =0,
        \qquad
        \Delta\mathcal{X}_Y\times_3\mathbf{P}_3
        =0.
        \label{eq:cogc-T13}
    \end{equation}
    The orthogonality is retained after spatial degradation:
    \[
        \left\langle
            \Delta\mathcal{X}_Y\times_1\mathbf{P}_1\times_2\mathbf{P}_2,
            \Delta\mathcal{X}_Z\times_1\mathbf{P}_1\times_2\mathbf{P}_2
        \right\rangle_F
        =0.
    \]
    Hence the LR-HSI correction cannot alter the HR-MSI calibration.

    \item[(iii)]
    The remaining error is
    \begin{equation}
        \mathcal{E}_{\mathrm{rem}}
        :=
        \mathcal{X}-\widehat{\mathcal{X}}
        =
        (\mathcal{E}_0-\mathcal{E}_{0,s})\times_3\mathbf{N}_3,
        \label{eq:cogc-T14}
    \end{equation}
    and is invisible to both sensors:
    \[
        \mathcal{E}_{\mathrm{rem}}\times_1\mathbf{P}_1\times_2\mathbf{P}_2
        =0,
        \qquad
        \mathcal{E}_{\mathrm{rem}}\times_3\mathbf{P}_3
        =0.
    \]
    Moreover, the decomposition is Pythagorean:
    \begin{equation}
        \|\mathcal{E}_0\|_F^2
        =
        \|\Delta\mathcal{X}_Z\|_F^2
        +
        \|\Delta\mathcal{X}_Y\|_F^2
        +
        \|\mathcal{E}_{\mathrm{rem}}\|_F^2.
        \label{eq:cogc-T15}
    \end{equation}

    \item[(iv)]
    The estimate $\widehat{\mathcal{X}}$ satisfies both observation equations
    exactly and is the unique minimum-change feasible reconstruction:
    \begin{equation}
        \begin{aligned}
            \widehat{\mathcal{X}}
            ={}&
            \underset{\widetilde{\mathcal{X}}}{\operatorname{argmin}}
            \left\|\widetilde{\mathcal{X}}-\mathcal{X}_0\right\|_F^2
            \\
            \text{s.t.}\quad
            &\widetilde{\mathcal{X}}
            \times_1\mathbf{P}_1\times_2\mathbf{P}_2
            =\mathcal{Y},\\
            &\widetilde{\mathcal{X}}\times_3\mathbf{P}_3
            =\mathcal{Z}.
        \end{aligned}
        \label{eq:cogc-T16}
    \end{equation}
\end{enumerate}
\end{itshape}
\end{theorem}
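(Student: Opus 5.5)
The plan is to vectorize everything and reduce each claim to identities for orthogonal projectors. I will use the convention $\operatorname{vec}(\mathcal{T}\times_1\mathbf{A}\times_2\mathbf{B}\times_3\mathbf{C})=(\mathbf{C}\otimes\mathbf{B}\otimes\mathbf{A})\operatorname{vec}(\mathcal{T})$. The only properties of the pseudoinverse I need are the Moore--Penrose identities $\mathbf{P}_i\mathbf{P}_i^{\dagger}\mathbf{P}_i=\mathbf{P}_i$ and $(\mathbf{P}_i^{\dagger}\mathbf{P}_i)^{\top}=\mathbf{P}_i^{\dagger}\mathbf{P}_i$. These hold for any matrix, which is why no rank assumption is needed, and they give the following facts:
\begin{itemize}
\item each $\boldsymbol{\Pi}_i$ is the orthogonal projector onto $\operatorname{range}(\mathbf{P}_i^{\top})$;
\item $\mathbf{N}_3$ is the orthogonal projector onto $\ker(\mathbf{P}_3)$;
\item $\mathbf{P}_i\boldsymbol{\Pi}_i=\mathbf{P}_i$ and $\mathbf{P}_3\mathbf{N}_3=\mathbf{0}$;
\item $\mathbf{P}_i\mathbf{D}=\mathbf{0}$ implies $\boldsymbol{\Pi}_i\mathbf{D}=\mathbf{0}$.
\end{itemize}
I also use that mode products along different modes commute, and that Kronecker products of orthogonal projectors are orthogonal projectors. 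Throughout I assume the true $\mathcal{X}$ obeys \eqref{equation:1} exactly.

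\textbf{Step (i).} Substituting \eqref{equation:1} into \eqref{eq:cogc-residuals} gives $\mathcal{E}_Y=\mathcal{E}_0\times_1\mathbf{P}_1\times_2\mathbf{P}_2$ and $\mathcal{E}_Z=\mathcal{E}_0\times_3\mathbf{P}_3$. Inserting these into \eqref{eq:cogc-corrections} yields the formulas in \eqref{eq:cogc-T12} immediately.

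\textbf{Step (ii).} The spectral fibers of $\Delta\mathcal{X}_Z$ and $\Delta\mathcal{X}_Y$ are images of $\boldsymbol{\Pi}_3$ and $\mathbf{N}_3$, respectively. In vectorized form, $\langle\Delta\mathcal{X}_Y,\Delta\mathcal{X}_Z\rangle_F$ contains the factor $\mathbf{N}_3^{\top}\boldsymbol{\Pi}_3=\mathbf{N}_3\boldsymbol{\Pi}_3=\mathbf{0}$, so it vanishes. The identity $\Delta\mathcal{X}_Y\times_3\mathbf{P}_3=0$ follows from $\mathbf{P}_3\mathbf{N}_3=\mathbf{0}$. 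For the spatially degraded inner product, I move $\times_1\mathbf{P}_1\times_2\mathbf{P}_2$ past the mode-3 factors. The spectral factors $\mathbf{N}_3$ and $\boldsymbol{\Pi}_3$ remain and still annihilate each other, so that inner product is zero as well.

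\textbf{Step (iii).} Subtracting both corrections from $\mathcal{E}_0=\mathcal{E}_0\times_3\boldsymbol{\Pi}_3+\mathcal{E}_0\times_3\mathbf{N}_3$ gives \eqref{eq:cogc-T14}.
\begin{itemize}
\item Spectral invisibility of $\mathcal{E}_{\mathrm{rem}}$ again uses $\mathbf{P}_3\mathbf{N}_3=\mathbf{0}$.
\item Spatial invisibility follows from $\mathbf{P}_i\boldsymbol{\Pi}_i=\mathbf{P}_i$, which gives $\mathcal{E}_{0,s}\times_1\mathbf{P}_1\times_2\mathbf{P}_2=\mathcal{E}_0\times_1\mathbf{P}_1\times_2\mathbf{P}_2$.
\item For the Pythagorean identity, write $\mathcal{E}_0$ as the sum of three pieces, corresponding to the three operators $\mathbf{I}\otimes\mathbf{I}\otimes\boldsymbol{\Pi}_3$, $\boldsymbol{\Pi}_1\otimes\boldsymbol{\Pi}_2\otimes\mathbf{N}_3$ and $(\mathbf{I}-\boldsymbol{\Pi}_1\otimes\boldsymbol{\Pi}_2)\otimes\mathbf{N}_3$ (in the order of the displayed Kronecker convention, spatial factors acting on modes 1--2 and the spectral factor on mode 3).
\item These are mutually orthogonal orthogonal projectors summing to the identity. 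The first is orthogonal to the other two through the spectral factor. The last two are orthogonal through the spatial factor, since $\boldsymbol{\Pi}_1\otimes\boldsymbol{\Pi}_2$ is itself an orthogonal projector.
\end{itemize}
Expanding the squared norm then gives \eqref{eq:cogc-T15}.

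\textbf{Step (iv).} This step is where the most care is needed, and I will argue it as an orthogonal-projection characterization.
\begin{itemize}
\item \emph{Feasibility.} This follows from (iii): $\widehat{\mathcal{X}}=\mathcal{X}-\mathcal{E}_{\mathrm{rem}}$ with $\mathcal{E}_{\mathrm{rem}}$ invisible to both sensors.
\item \emph{Structure of the feasible set.} It is the affine set $\widehat{\mathcal{X}}+\mathcal{L}$, where $\mathcal{L}=\{\mathcal{D}:\mathcal{D}\times_1\mathbf{P}_1\times_2\mathbf{P}_2=0,\ \mathcal{D}\times_3\mathbf{P}_3=0\}$ is a linear subspace.
\item \emph{Reduction.} It suffices to show $\widehat{\mathcal{X}}-\mathcal{X}_0=\Delta\mathcal{X}_Z+\Delta\mathcal{X}_Y\perp\mathcal{L}$. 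Then for any $\mathcal{D}\in\mathcal{L}$ we get $\|\widehat{\mathcal{X}}+\mathcal{D}-\mathcal{X}_0\|_F^2=\|\widehat{\mathcal{X}}-\mathcal{X}_0\|_F^2+\|\mathcal{D}\|_F^2$, which gives both minimality and uniqueness.
\item \emph{Orthogonality to $\mathcal{L}$.} Take $\mathcal{D}\in\mathcal{L}$ and use self-adjointness of the projectors. First, $\langle\mathcal{E}_0\times_3\boldsymbol{\Pi}_3,\mathcal{D}\rangle_F=\langle\mathcal{E}_0,\mathcal{D}\times_3\boldsymbol{\Pi}_3\rangle_F=0$, because $\mathcal{D}\times_3\mathbf{P}_3=0$ forces $\mathcal{D}\times_3\boldsymbol{\Pi}_3=0$. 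Second, $\langle\mathcal{E}_{0,s}\times_3\mathbf{N}_3,\mathcal{D}\rangle_F=\langle\mathcal{E}_0,\mathcal{D}\times_1\boldsymbol{\Pi}_1\times_2\boldsymbol{\Pi}_2\times_3\mathbf{N}_3\rangle_F=0$.
\end{itemize}

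The main obstacle is the last implication: that $\mathcal{D}\times_1\mathbf{P}_1\times_2\mathbf{P}_2=0$ forces $\mathcal{D}\times_1\boldsymbol{\Pi}_1\times_2\boldsymbol{\Pi}_2=0$. I would handle it by writing $\boldsymbol{\Pi}_2\otimes\boldsymbol{\Pi}_1=(\mathbf{P}_2\otimes\mathbf{P}_1)^{\dagger}(\mathbf{P}_2\otimes\mathbf{P}_1)$, using the Kronecker property of the pseudoinverse, and then applying the single-matrix fact that $\mathbf{P}\mathbf{D}=\mathbf{0}$ implies $\boldsymbol{\Pi}\mathbf{D}=\mathbf{0}$ to the mode-$(1,2)$ unfolding.
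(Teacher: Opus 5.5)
Your proof is correct. Parts (i)--(iii) follow the paper's argument essentially step for step. Your feasibility check in (iv), writing $\widehat{\mathcal{X}}=\mathcal{X}-\mathcal{E}_{\mathrm{rem}}$ with $\mathcal{E}_{\mathrm{rem}}$ invisible to both sensors, is a shortcut for the paper's direct computation.

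The genuine difference is how you show that $\Delta\mathcal{X}_Y+\Delta\mathcal{X}_Z$ is orthogonal to the joint null space. The paper first proves $\mathcal{E}_{\mathrm{rem}}=\operatorname{proj}_{\mathbb{V}_0}(\mathcal{E}_0)$ in two moves. It identifies $\mathcal{T}\mapsto\mathcal{T}-\mathcal{T}\times_1\boldsymbol{\Pi}_1\times_2\boldsymbol{\Pi}_2$ as the orthogonal projector onto the spatial null space, via $(\mathbf{P}_2\otimes\mathbf{P}_1)^{\dagger}(\mathbf{P}_2\otimes\mathbf{P}_1)=\boldsymbol{\Pi}_2\otimes\boldsymbol{\Pi}_1$. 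It then invokes the lemma that a product of two commuting orthogonal projectors projects onto the intersection of their ranges.

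You instead pair each correction with an arbitrary $\mathcal{D}\in\mathcal{L}$ and move the self-adjoint factors onto $\mathcal{D}$. You then use only that $\boldsymbol{\Pi}_3$ annihilates everything $\mathbf{P}_3$ annihilates, and likewise $\boldsymbol{\Pi}_1,\boldsymbol{\Pi}_2$ for the spatial degradation. This needs neither the intersection lemma nor a characterization of the spatial null-space projector, so it is the more elementary route. The paper's identity $\mathcal{E}_{\mathrm{rem}}=\operatorname{proj}_{\mathbb{V}_0}(\mathcal{E}_0)$ still follows as a corollary. Indeed, $\mathcal{E}_0=(\Delta\mathcal{X}_Y+\Delta\mathcal{X}_Z)+\mathcal{E}_{\mathrm{rem}}$, with the first summand in $\mathcal{L}^{\perp}$ and the second in $\mathcal{L}$.

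The step you single out as the main obstacle is actually a one-liner. Because $\boldsymbol{\Pi}_i=\mathbf{P}_i^{\dagger}\mathbf{P}_i$, composing same-mode products and commuting distinct modes gives $\mathcal{D}\times_1\boldsymbol{\Pi}_1\times_2\boldsymbol{\Pi}_2=(\mathcal{D}\times_1\mathbf{P}_1\times_2\mathbf{P}_2)\times_1\mathbf{P}_1^{\dagger}\times_2\mathbf{P}_2^{\dagger}=0$. Equivalently, $\boldsymbol{\Pi}_2\otimes\boldsymbol{\Pi}_1=(\mathbf{P}_2^{\dagger}\otimes\mathbf{P}_1^{\dagger})(\mathbf{P}_2\otimes\mathbf{P}_1)$ by the mixed-product rule. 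So you never need the fact that the left factor is the pseudoinverse of $\mathbf{P}_2\otimes\mathbf{P}_1$, though your route through that fact is still valid.

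One notational slip: under your convention $(\mathbf{C}\otimes\mathbf{B}\otimes\mathbf{A})$ the mode-3 factor is leftmost. The three projectors in (iii) should therefore read $\boldsymbol{\Pi}_3\otimes\mathbf{I}\otimes\mathbf{I}$, $\mathbf{N}_3\otimes\boldsymbol{\Pi}_2\otimes\boldsymbol{\Pi}_1$, and $\mathbf{N}_3\otimes(\mathbf{I}-\boldsymbol{\Pi}_2\otimes\boldsymbol{\Pi}_1)$. This does not affect the argument.
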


Theorem~\ref{property of COGC} gives an error-decomposition interpretation of
COGC. The HR-MSI and LR-HSI branches recover sensor-identifiable components
of the Stage-1 error in orthogonal spectral subspaces. Since the LR-HSI
correction lies in the null space of the spectral response, it cannot disturb
the HR-MSI calibration. After correction, the remaining error is invisible
to both sensors and forms a Pythagorean decomposition with the recovered
components. The calibrated result also satisfies both observation equations
and is the unique feasible reconstruction closest to $\mathcal{X}_0$. Thus,
COGC exploits observable residual information through complementary,
noninterfering corrections while minimally modifying the preliminary
estimate.

\setcounter{topnumber}{4}
\setcounter{dbltopnumber}{4}
\setcounter{totalnumber}{8}
\renewcommand{\topfraction}{0.99}
\renewcommand{\dbltopfraction}{0.99}
\renewcommand{\textfraction}{0.01}

\makeatletter
\setlength{\@fptop}{0pt}
\setlength{\@fpbot}{0pt plus 1fil}
\setlength{\@dblfptop}{0pt}
\setlength{\@dblfpbot}{0pt plus 1fil}
\makeatother


\begin{table*}[!t]
  \centering
  \caption{Quantitative metrics of the compared approaches on the ICVL dataset}
  \label{tab:ICVL_results}
  {\small
    \setlength{\tabcolsep}{4.5pt}
    \renewcommand{\arraystretch}{1.05}
    \begin{tabular}{*{13}{c}}
      \toprule
      \multirow[c]{2}{*}{\raisebox{-1.0ex}{Dataset}}
        & \multirow[c]{2}{*}{\raisebox{-1.0ex}{Index}}
        & \multicolumn{2}{c}{Low Rank}
        & \multicolumn{3}{c}{Deep Learning}
        & \multicolumn{6}{c}{Hybrid} \\
      \cmidrule(lr){3-4}
      \cmidrule(lr){5-7}
      \cmidrule(lr){8-13}
        & & CTDF & GTNN & CAFE+ & EDIP & CS2DIPs
        & DELTA & CNN-FUS & LRTFR & CLoRF & SSLRDN & TSR-ITNR \\
      \midrule

      \multirow[c]{4}{*}{BGU}
        & PSNR$\uparrow$
        & 50.30 & 50.15 & 51.17 & 54.41 & 51.51
        & 48.62 & 46.76 & 50.96 & 53.53
        & \underline{54.77} & \textbf{56.61} \\

        & SSIM$\uparrow$
        & 0.996 & 0.997 & 0.996
        & \underline{0.998} & \underline{0.998}
        & 0.993 & 0.987 & 0.997
        & \underline{0.998}
        & \textbf{0.999} & \textbf{0.999} \\

        & SAM$\downarrow$
        & 0.019 & 0.020 & 0.020 & \underline{0.011} & 0.018
        & 0.030 & 0.041 & 0.020 & 0.014
        & 0.012 & \textbf{0.010} \\

        & ERGAS$\downarrow$
        & 21.29 & 22.10 & 13.97 & 9.65 & 16.16
        & 24.02 & 26.20 & 14.17 & 10.62
        & \underline{8.90} & \textbf{7.66} \\

      \midrule
      \multirow[c]{4}{*}{Flower}
        & PSNR$\uparrow$
        & 48.09 & 48.46 & 46.40 & 48.88 & 49.02
        & 44.89 & 45.31 & 44.69
        & \underline{49.38} & 48.81 & \textbf{50.23} \\

        & SSIM$\uparrow$
        & \underline{0.995} & \underline{0.995}
        & 0.989 & \underline{0.995}
        & \textbf{0.996} & 0.987 & 0.990 & 0.987
        & \textbf{0.996} & \textbf{0.996} & \textbf{0.996} \\

        & SAM$\downarrow$
        & 0.012 & 0.011 & 0.016 & 0.011 & 0.011
        & 0.018 & 0.017 & 0.014
        & \underline{0.010} & 0.011 & \textbf{0.009} \\

        & ERGAS$\downarrow$
        & 15.25 & 12.47 & 14.42 & 10.59 & 11.74
        & 21.43 & 19.08 & 16.43
        & \textbf{9.81} & 10.89 & \underline{10.05} \\

      \midrule
      \multirow[c]{4}{*}{Eve}
        & PSNR$\uparrow$
        & 47.94 & 49.50 & 48.52 & \underline{51.47} & 48.99
        & 44.45 & 46.55 & 46.79
        & 51.37 & 51.05 & \textbf{51.95} \\

        & SSIM$\uparrow$
        & 0.994 & 0.995 & 0.993
        & \underline{0.997} & 0.996
        & 0.986 & 0.993 & 0.992
        & \underline{0.997}
        & \textbf{0.998} & \textbf{0.998} \\

        & SAM$\downarrow$
        & 0.017 & 0.014 & 0.017
        & \underline{0.011} & 0.016
        & 0.030 & 0.020 & 0.016
        & \underline{0.011} & \underline{0.011}
        & \textbf{0.010} \\

        & ERGAS$\downarrow$
        & 20.98 & 12.95 & 13.22
        & \underline{9.65} & 16.37
        & 30.88 & 20.56 & 15.53
        & 9.87 & 10.14 & \textbf{9.60} \\

      \midrule
      \multirow[c]{4}{*}{Hill}
        & PSNR$\uparrow$
        & 52.57 & 52.52 & 51.46 & 53.02 & 52.60
        & 48.35 & 46.54 & 49.69 & 53.13
        & \underline{54.58} & \textbf{55.73} \\

        & SSIM$\uparrow$
        & \underline{0.997} & 0.996 & 0.995
        & \underline{0.997} & \underline{0.997}
        & 0.992 & 0.985 & 0.994
        & \underline{0.997}
        & \textbf{0.998} & \textbf{0.998} \\

        & SAM$\downarrow$
        & 0.010 & 0.009 & 0.011
        & \underline{0.008} & 0.009
        & 0.015 & 0.022 & 0.009
        & \underline{0.008} & \underline{0.008}
        & \textbf{0.007} \\

        & ERGAS$\downarrow$
        & 11.66 & 9.72 & 9.75 & 8.23 & 9.14
        & 16.59 & 18.83 & 11.42 & 7.82
        & \underline{6.71} & \textbf{6.41} \\

      \midrule
      \multirow[c]{4}{*}{Nachal}
        & PSNR$\uparrow$
        & 51.54 & 51.09 & 49.41 & 50.00 & 50.88
        & 47.22 & 46.94 & 48.65 & 53.26
        & \underline{54.47} & \textbf{55.36} \\

        & SSIM$\uparrow$
        & 0.996 & 0.996 & 0.993
        & \underline{0.998} & 0.997
        & 0.992 & 0.991 & 0.993 & 0.997
        & \textbf{0.999} & \textbf{0.999} \\

        & SAM$\downarrow$
        & 0.015 & 0.016 & 0.019
        & \textbf{0.009} & 0.013
        & 0.025 & 0.028 & 0.016
        & 0.012
        &\underline{0.010} & \textbf{0.009} \\

        & ERGAS$\downarrow$
        & 15.96 & 12.55 & 13.59 & 13.78 & 23.97
        & 20.71 & 21.00 & 14.25 & 8.58
        & \underline{7.43} & \textbf{7.29} \\
      \bottomrule
    \end{tabular}
  }
\end{table*}


\section{Experiments and Analysis}
\label{sec:experiments}

To comprehensively evaluate the proposed method, we conduct comparative
experiments, ablation studies, and evaluations on downstream semantic
segmentation. Experimental results on simulated datasets are presented to
validate the effectiveness of the proposed method. For all experiments, the
hyperparameters are empirically set to $\lambda=0.5$, $\beta=0.05$, and
$\mu=1.05$. Optimization is performed using the Adam optimizer with a learning
rate of $l_r=5\times10^{-5}$.
\begin{figure*}[!t]
    \centering
    \setlength{\abovecaptionskip}{2pt}

    \makebox[\textwidth][l]{%
    \hspace*{-0.03\textwidth}
    \includegraphics[
        width=1.04\textwidth,
        trim={0mm 2mm 0mm 0mm},
        clip
    ]{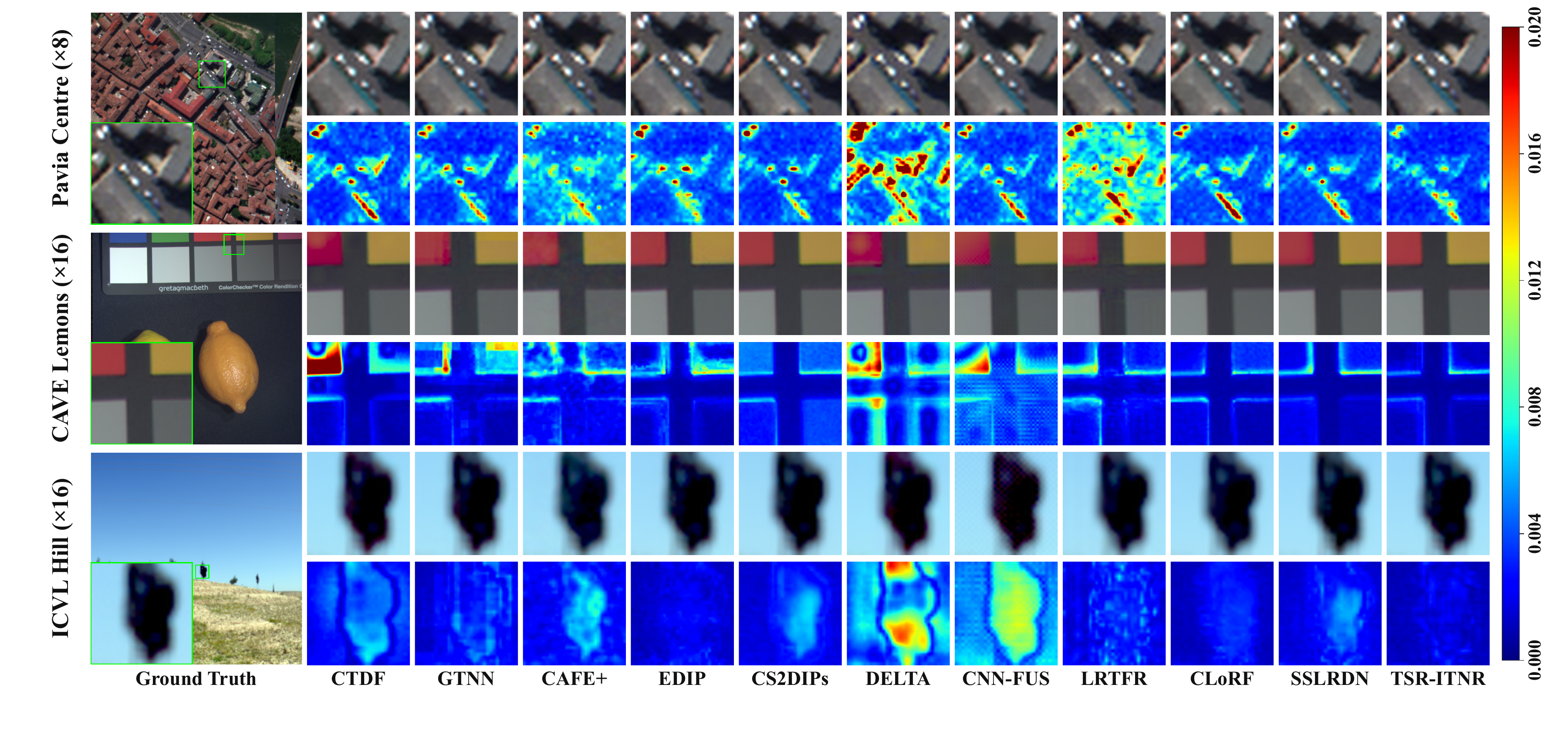}%
}%
    \vspace{-8pt}
    \caption{Visual comparisons of selected representative methods on
    Pavia Centre ($\times 8$), CAVE Lemons ($\times 16$), and ICVL Hill
    ($\times 16$). The three large images present pseudocolor outputs for
    representative samples, with green boxes indicating regions selected
    for local magnification. For each dataset block, the top row shows
    enlarged pseudocolor details, and the bottom row shows the corresponding
    residual maps.}
    \label{fig:combined_hsi_comparison}
\end{figure*}
\subsection{Dataset Description}

\subsubsection{Pavia Dataset}

The Pavia dataset consists of two hyperspectral scenes, namely Pavia Centre and Pavia University, of which Pavia Centre is used in this study. The Pavia Centre scene has a spatial size of $1096 \times 1096$ pixels and originally contains 115 spectral bands. After removing 13 low-SNR bands, a $256 \times 256$ patch with 102 spectral bands is selected from the upper-left region as the reference HR-HSI. To obtain the LR-HSI, the reference HR-HSI is blurred using an $8 \times 8$ Gaussian filter and subsequently downsampled by a factor of 8. The corresponding HR-MSI is simulated using an IKONOS-like reflectance spectral response filter.

\subsubsection{CAVE Dataset}

The CAVE dataset consists of 32 hyperspectral images, each with a spatial
resolution of $512\times512$ pixels and 31 spectral bands. Five scenes,
namely Balloons, Peppers, Lemons, Sponges, and Clay, are randomly selected and
treated as the reference HR-HSIs. For each scene, the LR-HSI is produced by
applying an $8\times8$ Gaussian blur kernel to the HR-HSI, followed by spatial
downsampling with a scale factor of 16. The HR-MSI is synthesized through
spectral degradation using the spectral response function of a Nikon D700
camera~\cite{dian2019learning}.

\subsubsection{ICVL Dataset}

The ICVL dataset comprises 200 hyperspectral images covering diverse indoor and
outdoor real-world scenes. Each image contains 31 spectral bands spanning
400-700~nm and has a spatial resolution of $1390\times1300$ pixels. We randomly
select five scenes, namely BGU\_0522-1217, Flower\_0325-1336, Eve\_0331-1551,
Hill\_0325-1228, and Nachal\_0823-1213, and crop the central $512\times512$
region of each scene as the reference HR-HSI. For brevity, these five scenes are
hereafter referred to as BGU, Flower, Eve, Hill, and Nachal, respectively. The
LR-HSI and HR-MSI are simulated using the same degradation settings as those
adopted for the CAVE dataset.

\subsection{Compared Methods and Quantitative Metrics}

To assess the effectiveness of TSR-ITNR, we conduct a comprehensive comparison with ten representative methods, including two low-rank representation-based methods: CTDF~\cite{xu2024coupled} and GTNN~\cite{dian2024hyperspectral}; three deep learning-based methods: CAFE+~\cite{ke2026content}, EDIP~\cite{li2025enhanced}, and CS2DIPs~\cite{fang2024cs2dips}; and five hybrid methods: DELTA~\cite{yang2025delta}, CNN-FUS~\cite{dian2020regularizing}, LRTFR~\cite{luo2023low}, CLoRF~\cite{wang2025hyperspectral}, and SSLRDN~\cite{chen2025learning}. Since CAFE+, DELTA, and LRTFR were not originally designed for HMIF, we make the necessary task-specific adaptations while preserving their core formulations. All model parameters are fine-tuned based on the settings recommended by the original papers. Fusion quality is evaluated using four commonly used metrics: peak signal-to-noise ratio (PSNR), structural similarity index (SSIM), spectral angle mapper (SAM), and relative dimensionless global error in synthesis (ERGAS).
\begin{figure}[!tp]
  \centering
  \includegraphics[width=0.47\textwidth]{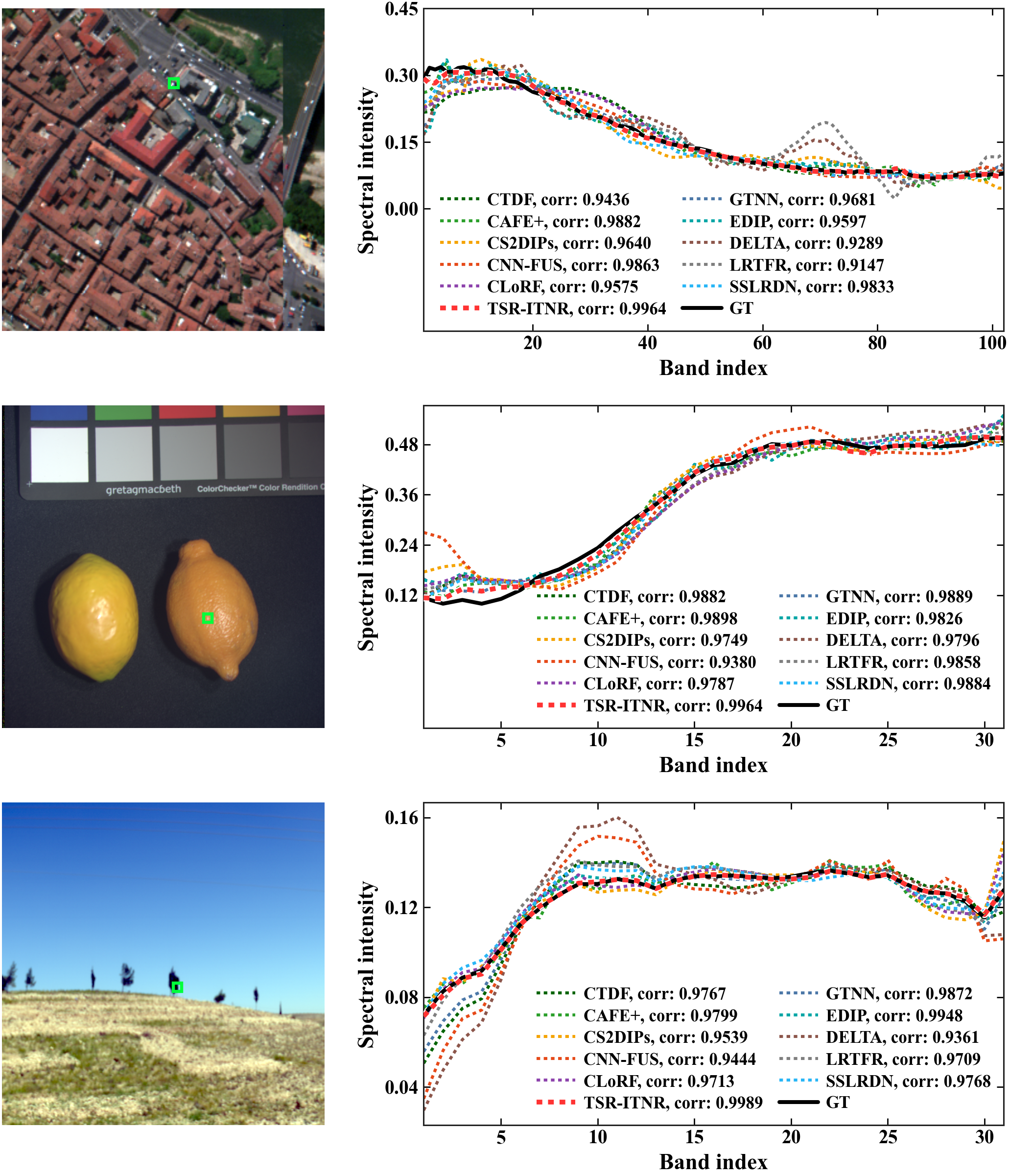}
  \caption{Spectral curves of different methods in three scenes, from top to
  bottom: Pavia Centre, CAVE Lemons, and ICVL Hill.}
  \label{fig:spectral_curves}

  \vspace{0.8\baselineskip}

  \captionof{table}{The Ablation Study Results of the Proposed Method With
  Different Strategies}
  \label{tab:ablation_study}
  {\footnotesize
  \setlength{\tabcolsep}{6pt}
  \renewcommand{\arraystretch}{1.05}
  \begin{tabular}{@{}ccccc@{}}
    \toprule
    Method & PSNR$\uparrow$ & SSIM$\uparrow$ & SAM$\downarrow$
    & ERGAS$\downarrow$ \\
    \midrule
    w/o Tucker & 43.15 & 0.966 & 0.031 & 27.56 \\
    \midrule
    w/o MSCR & 50.30 & 0.995 & 0.016 & 14.13 \\
    \midrule
    w/o AGPSR & \underline{52.99} & \underline{0.997}
    & \underline{0.011} & 9.42 \\
    \midrule
    w/o \mbox{MSCR \& AGPSR} & 49.40 & 0.994 & 0.018 & 15.46 \\
    \midrule
    w/o COGC & 52.64 & \underline{0.997} & \underline{0.011}
    & \underline{9.01} \\
    \midrule
    w/o LR-HSI & 34.49 & 0.983 & 0.107 & 88.15 \\
    \midrule
    w/o HR-MSI & 28.20 & 0.749 & 0.060 & 149.08 \\
    \midrule
    TSR-ITNR & \textbf{53.98} & \textbf{0.998} & \textbf{0.009}
    & \textbf{8.20} \\
    \bottomrule
  \end{tabular}}
  \vspace{-4pt}
\end{figure}
\subsection{Comparative Experimental Results}

Tables~\ref{tab:Pavia_results}--\ref{tab:ICVL_results} present the quantitative
results of the comparative experiments, where the best and second-best results
are highlighted in boldface and underlined, respectively. TSR-ITNR achieves
superior performance across nearly all datasets and evaluation metrics,
demonstrating consistent reconstruction accuracy across scenes with diverse
spatial structures and spectral characteristics. In contrast, the competing methods exhibit more pronounced performance variations across different datasets. CS2DIPs employs two separate
DIPs for spatial and spectral modeling, which limits their interaction and
generally yields lower accuracy than TSR-ITNR. Compared with TSR-ITNR, CLoRF
is less effective in jointly capturing fine-grained spatial structures and
complex spectral information, resulting in lower performance in most cases.
SSLRDN primarily emphasizes spectral subspace decomposition, while providing
a less comprehensive treatment of spatial low-rank structure and complementary
observation information, which limits its fusion accuracy. Overall, these
comparisons highlight the effectiveness of TSR-ITNR in coordinating
spatial--spectral modeling with complementary observation utilization for
high-quality fusion.

Fig.~\ref{fig:combined_hsi_comparison} compares the reconstructed images and
error maps of different methods on the three simulated datasets, with green
boxes marking representative regions. Most methods recover the principal
scene structures, making their differences difficult to distinguish from the
reconstructed images alone. DELTA, however, exhibits noticeable blurring and
loss of fine edge details across all three scenes. The differences are clearer
in the error maps: several competing methods retain pronounced errors around
structural boundaries and textured regions, whereas TSR-ITNR produces
consistently darker maps with fewer localized artifacts, indicating more
accurate spatial reconstruction. Fig.~\ref{fig:spectral_curves} further
compares the spectral curves at representative pixels. TSR-ITNR follows the
ground truth most closely across the spectral range and achieves the highest
correlation coefficient in all three scenes, while the other methods show
larger deviations near sharp transitions and local extrema. These qualitative
results demonstrate the superior spatial and spectral fidelity of TSR-ITNR.
\begin{figure*}
\centering
  \includegraphics[width=1.0\textwidth]{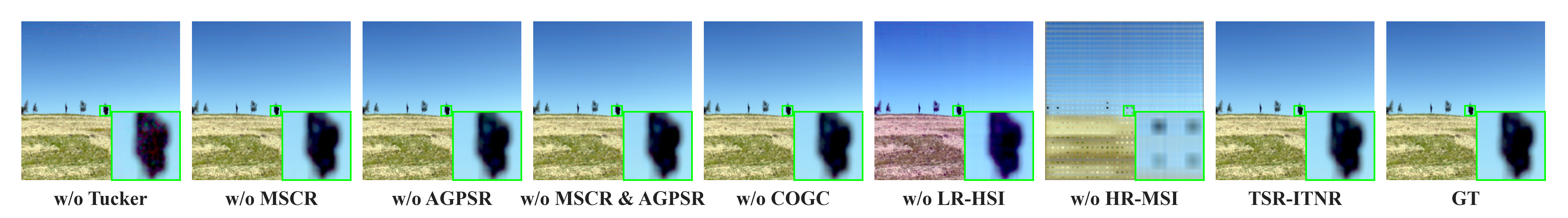}
  \caption{Visual comparison of different ablation settings on ICVL Hill
($\times 16$).}
  \label{fig:Ablation study}
\end{figure*}
\subsection{Ablation Study Results}
To assess the contribution of each component to the overall reconstruction
performance, we conduct a series of ablation studies. Table~\ref{tab:ablation_study}
reports the quantitative results averaged over five simulated scenes from the
ICVL dataset, while Fig.~\ref{fig:Ablation study} presents the corresponding
visual comparison on the ICVL Hill scene.
\subsubsection{Ablation Study on Tucker-Based Representation}
To assess the overall contribution of the Tucker-based structured
reconstruction, we remove the Tucker decomposition, MSCR, and AGPSR and
instead use GFINER to reconstruct the HR-HSI directly from hash-encoded
spatial--spectral coordinates, while retaining COGC. The resulting model is
denoted as w/o Tucker. Its pronounced performance degradation demonstrates
the benefit of structured low-rank spatial--spectral modeling and refinement.
\subsubsection{Ablation Study on MSCR}
To evaluate the effectiveness of MSCR, we remove it from TSR-ITNR while
retaining AGPSR and COGC, denoted as w/o MSCR. The unrefined low-rank spatial
coefficient tensor is directly combined with the spectral basis refined by
AGPSR to form the preliminary HR-HSI. The clear performance degradation
indicates that MSCR effectively captures multiscale spatial dependencies and
fine structural details beyond the Tucker-based low-rank representation.
\subsubsection{Ablation Study on AGPSR}
To evaluate the effectiveness of AGPSR, we remove it from TSR-ITNR while
retaining MSCR and COGC, denoted as w/o AGPSR. The unrefined spectral basis
is directly combined with the low-rank spatial coefficient tensor refined by
MSCR to form the preliminary HR-HSI. Removing AGPSR leads to a modest but
consistent performance decline across all metrics, indicating that its
geometry-aware refinement of inter-band dependencies provides complementary
benefits to HR-HSI reconstruction.
\subsubsection{Ablation Study on MSCR and AGPSR}
To evaluate the effectiveness of joint spatial-spectral refinement, we
remove MSCR and AGPSR while retaining the Tucker decomposition and COGC,
denoted as w/o \mbox{MSCR \& AGPSR}. In this setting, the Tucker-derived
low-rank spatial coefficient tensor and spectral basis are directly combined
without further refinement. This setting outperforms w/o Tucker but
underperforms the complete TSR-ITNR. The comparison with w/o Tucker
demonstrates the effectiveness of Tucker-based structural modeling, while
the gap from the complete model shows that jointly refining both components
enables more effective representation of complex spatial-spectral details.
\subsubsection{Ablation Study on COGC}
To evaluate the effectiveness of COGC, we remove the entire second stage and
directly use the Stage 1 reconstruction as the final output, denoted as w/o
COGC. In Stage 1, the LR-HSI and HR-MSI guide the reconstruction through soft
data-consistency terms, which may not fully exploit their complementary
information. The resulting performance degradation indicates that COGC
further leverages this complementarity to calibrate the preliminary
reconstruction and improve fusion accuracy.

\subsubsection{Ablation Study on the Effectiveness of Fusion} To evaluate the effectiveness of fusing the two observations, we reconstruct
the HR-HSI using either the HR-MSI alone (w/o LR-HSI) or the LR-HSI alone
(w/o HR-MSI). Both settings exhibit substantial performance degradation,
demonstrating the benefit of jointly exploiting the two observations. This
benefit arises from their complementary information: the HR-MSI provides fine
spatial details, whereas the LR-HSI preserves rich spectral information.
\begin{figure*}[!t]
\centering
\includegraphics[width=1.0\textwidth]{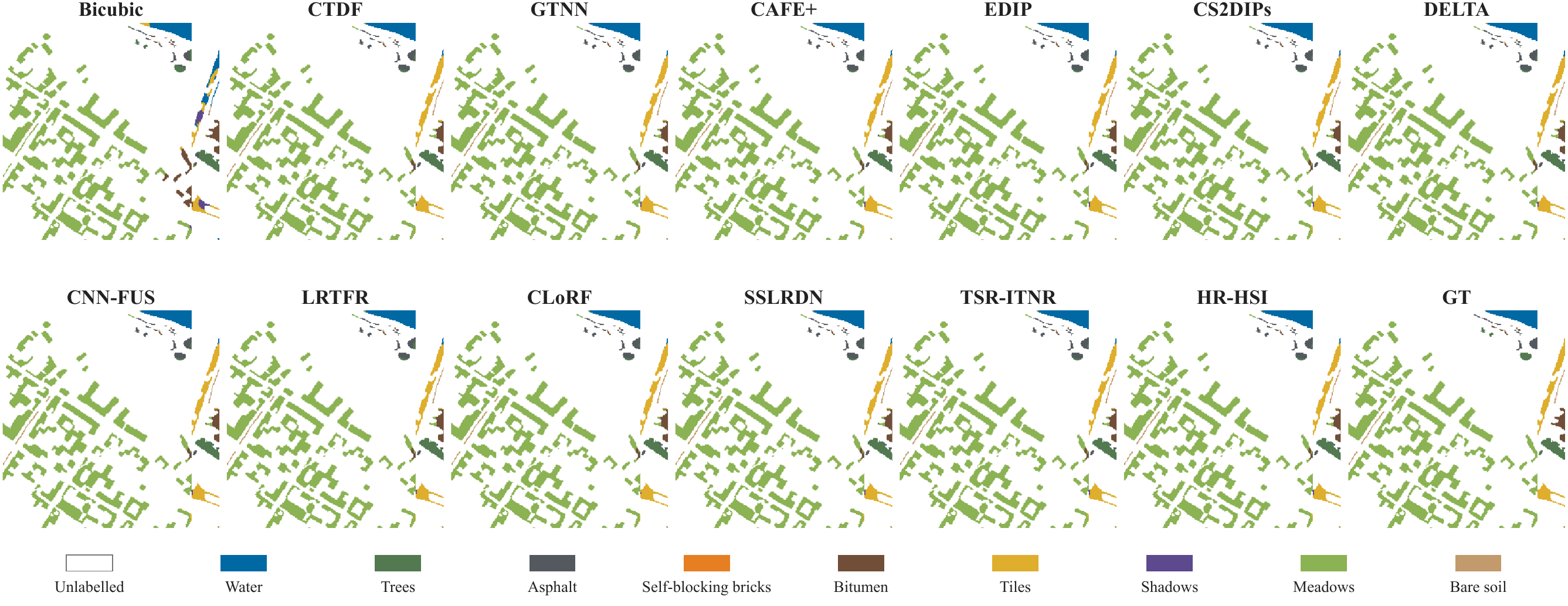}
\captionof{figure}{Visual comparison of semantic segmentation maps obtained from the
Bicubic baseline, different reconstructed HSIs, and the reference HR-HSI on
the Pavia Centre test region. GT denotes the ground-truth labels.}
\label{fig:pavia_segmentation_maps}

\par\vspace{0.4\baselineskip}

\captionof{table}{Quantitative results of semantic segmentation on
the Pavia Centre}
\label{tab:pavia_segmentation_results}
{\small
\setlength{\tabcolsep}{2.5pt}
\renewcommand{\arraystretch}{1.0}
\begin{tabular}{*{14}{c}}
\toprule
\multirow[c]{2}{*}{\raisebox{-1.0ex}{Index}}
& \multicolumn{1}{c}{Baseline}
& \multicolumn{2}{c}{Low Rank}
& \multicolumn{3}{c}{Deep Learning}
& \multicolumn{6}{c}{Hybrid}
& \multicolumn{1}{c}{Reference} \\
\cmidrule(lr){2-2}
\cmidrule(lr){3-4}
\cmidrule(lr){5-7}
\cmidrule(lr){8-13}
\cmidrule(lr){14-14}
& Bicubic & CTDF & GTNN & CAFE+ & EDIP & CS2DIPs
& DELTA & CNN-FUS & LRTFR & CLoRF & SSLRDN & TSR-ITNR & HR-HSI \\
\midrule
mIoU$\uparrow$
& 55.36 & 81.98 & 82.61 & 82.64 & 82.50 & \underline{82.71}
& 82.23 & 82.49 & 81.96 & 82.16 & 82.50 & \textbf{82.77}
& \textit{82.89} \\
Macro-F1$\uparrow$
& 66.83 & 88.67 & 89.15 & 89.12 & 89.03 &  \underline{89.21}
& 88.82 & 89.02 & 88.61 & 88.79 & 89.08 & \textbf{89.23}
& \textit{89.33} \\
OA$\uparrow$
& 90.93 & 97.58 & \textbf{97.66} & 97.63 & 97.63 & \underline{97.65}
& 97.58 & 97.60 & 97.55 & 97.60 & \underline{97.65} & \textbf{97.66}
& \textit{97.78} \\
AA$\uparrow$
& 71.43 & 90.07 & 90.74 & 90.77 & 90.56 & \underline{90.85}
& 90.09 & 90.68 & 89.81 & 90.38 & 90.81 & \textbf{90.90}
& \textit{90.91} \\
\bottomrule
\end{tabular}}
\end{figure*}
\subsection{Semantic Segmentation Results}

Since conventional reconstruction metrics, such as PSNR and SSIM, do not
fully reflect the utility of reconstructed HSIs in downstream
applications~\cite{10683775}, we further assess their semantic segmentation
performance. Following the data partition used in the preceding
super-resolution experiments, the upper-left $256\times256$ region of the
Pavia Centre scene is reserved for testing, while the remaining area is
divided into spatially disjoint training and validation sets. We adapt
UNetFormer~\cite{wang2022unetformer} to directly process hyperspectral inputs
with 102 spectral bands. The model is trained only once on the reference
HR-HSI, with model selection performed on the validation set. Its parameters
are then frozen, and the same model is applied to the bicubic baseline, all
reconstructed HSIs, and the reference HR-HSI without method-specific
fine-tuning. Segmentation performance is evaluated using mean intersection
over union (mIoU), macro-averaged F1 score (Macro-F1), overall accuracy (OA),
and average accuracy (AA). Table~\ref{tab:pavia_segmentation_results} reports the quantitative results,
where the best and second-best results among the reconstructed HSIs are
highlighted in boldface and underlined, respectively. The results obtained
from the reference HR-HSI in the rightmost column are italicized for
distinction. The corresponding segmentation maps are shown in
Fig.~\ref{fig:pavia_segmentation_maps}.

\section{Conclusion}
\label{sec:conclusion}

In this article, we presented TSR-ITNR, a self-supervised two-stage framework
for HMIF. Stage 1 employs an implicit Tucker representation to
preserve the high-order structure and capture the global low-rank
spatial-spectral correlations of the HR-HSI. Dedicated spatial and spectral
refinements further enhance fine spatial details and interband dependencies,
while PnP-HQS incorporates a pretrained denoiser as a deep prior. Stage 2 applies the parameter-free COGC to calibrate the preliminary
reconstruction using complementary corrections derived
from the LR-HSI and HR-MSI. Theoretical analysis establishes the basis
independence and Gram preservation of spectral refinement, together with the
orthogonal complementarity and minimum-change property of COGC. Experiments on simulated datasets demonstrate strong quantitative,
visual, and spectral reconstruction performance. Ablation studies confirm the
contribution of the proposed components, while semantic segmentation further
validates the downstream utility of the reconstructed HSIs. Future work will
extend the proposed implicit tensor representation and complementary
calibration framework to other hyperspectral restoration and multimodal
remote-sensing fusion tasks.
\nocite{*}
\bibliographystyle{IEEEtran}

\appendix
This appendix provides detailed proofs of the two theoretical results presented
in the main text. Theorem~\ref{thm:agpsr} establishes the validity, basis
independence, and Gram-preserving property of the AGPSR update.
Theorem~\ref{property of COGC} characterizes the orthogonal complementarity of
the two COGC branches, the sensor-unobservable residual, and the observation
consistency and minimum-change property of the calibrated reconstruction.
\par\medskip
\begin{proof}[Proof of Theorem~\ref{thm:agpsr}]
We establish the four claims in sequence. First, since
\(\mathbf{Q}^{\top}\mathbf{Q}=\mathbf{I}_{r_3}\) and
\(\boldsymbol{\Pi}_{\mathbf{W}}=\mathbf{Q}\mathbf{Q}^{\top}\),
\begin{align*}
    \mathbf{Q}^{\top}\mathbf{T}
    &=
    \mathbf{Q}^{\top}
    \bigl(\mathbf{I}_C-\mathbf{Q}\mathbf{Q}^{\top}\bigr)\mathbf{S}=
    \mathbf{Q}^{\top}\mathbf{S}
    -(\mathbf{Q}^{\top}\mathbf{Q})\mathbf{Q}^{\top}\mathbf{S}\\
    &=
    \mathbf{Q}^{\top}\mathbf{S}
    -\mathbf{I}_{r_3}\mathbf{Q}^{\top}\mathbf{S}
    =\mathbf{0}.
\end{align*}
Taking transposes gives
\(\mathbf{T}^{\top}\mathbf{Q}=\mathbf{0}\). The horizontal representation of
the tangent space of the Grassmann manifold consists precisely of matrices
\(\mathbf{Z}\) satisfying
\(\mathbf{Q}^{\top}\mathbf{Z}=\mathbf{0}\); hence \(\mathbf{T}\) is a valid
horizontal tangent direction. Moreover,
\begin{align*}
    \mathbf{K}\mathbf{Q}
    &=
    \bigl(
        \mathbf{T}\mathbf{Q}^{\top}
        -\mathbf{Q}\mathbf{T}^{\top}
    \bigr)\mathbf{Q}=
    \mathbf{T}(\mathbf{Q}^{\top}\mathbf{Q})
    -\mathbf{Q}(\mathbf{T}^{\top}\mathbf{Q})
    =\mathbf{T}.
\end{align*}
This proves part (i).

\smallskip
\noindent Next, let
\(\widetilde{\mathbf{Q}}=\mathbf{Q}\mathbf{O}\) and
\(\widetilde{\mathbf{R}}=\mathbf{O}^{\top}\mathbf{R}\), where
\(\mathbf{O}^{\top}\mathbf{O}=\mathbf{I}_{r_3}\). Then
\[
    \widetilde{\mathbf{Q}}\widetilde{\mathbf{R}}
    =
    \mathbf{Q}\mathbf{O}\mathbf{O}^{\top}\mathbf{R}
    =
    \mathbf{W},
    \qquad
    \widetilde{\mathbf{Q}}^{\top}\widetilde{\mathbf{Q}}
    =
    \mathbf{I}_{r_3}.
\]
Thus this is another orthonormal-basis representation of the same matrix
\(\mathbf{W}\); it need not be a triangular QR factorization. Its projector is
\[
    \widetilde{\boldsymbol{\Pi}}_{\mathbf{W}}
    =
    \widetilde{\mathbf{Q}}\widetilde{\mathbf{Q}}^{\top}
    =
    \mathbf{Q}\mathbf{O}\mathbf{O}^{\top}\mathbf{Q}^{\top}
    =
    \boldsymbol{\Pi}_{\mathbf{W}}.
\]
Since the attention matrices depend only on the basis-invariant projector
$\boldsymbol{\Pi}_{\mathbf{W}}$, the transformation
$\widetilde{\mathbf{Q}}=\mathbf{Q}\mathbf{O}$ leaves them unchanged and yields
$\widetilde{\mathbf{S}}=\mathbf{S}\mathbf{O}$. Hence,
\[
    \widetilde{\mathbf{T}}
    =
    \bigl(
        \mathbf{I}_C-\widetilde{\boldsymbol{\Pi}}_{\mathbf{W}}
    \bigr)\widetilde{\mathbf{S}}
    =
    \bigl(\mathbf{I}_C-\boldsymbol{\Pi}_{\mathbf{W}}\bigr)
    \mathbf{S}\mathbf{O}
    =
    \mathbf{T}\mathbf{O}.
\]
Therefore
\begin{align*}
    \widetilde{\mathbf{K}}=
    \widetilde{\mathbf{T}}\widetilde{\mathbf{Q}}^{\top}
    -
    \widetilde{\mathbf{Q}}\widetilde{\mathbf{T}}^{\top}=
    \mathbf{T}\mathbf{O}\mathbf{O}^{\top}\mathbf{Q}^{\top}
    -
    \mathbf{Q}\mathbf{O}\mathbf{O}^{\top}\mathbf{T}^{\top}
    =
    \mathbf{K}.
\end{align*}
Equation \eqref{eq:agpsr-cayley} then gives
\(\widetilde{\mathbf{C}}_{\alpha}=\mathbf{C}_{\alpha}\) and
\(\widetilde{\mathbf{W}}_{\mathrm{ref}}
=\mathbf{C}_{\alpha}\widetilde{\mathbf{Q}}\widetilde{\mathbf{R}}
=\mathbf{C}_{\alpha}\mathbf{W}
=\mathbf{W}_{\mathrm{ref}}\).
This proves part (ii).

\smallskip
\noindent Subsequently, transposing \eqref{eq:agpsr-generator} gives
\[
    \mathbf{K}^{\top}
    =
    \mathbf{Q}\mathbf{T}^{\top}
    -\mathbf{T}\mathbf{Q}^{\top}
    =
    -\mathbf{K},
\]
so \(\mathbf{K}\) is real skew-symmetric. Put
\[
    \tau=\frac{\alpha}{2},
    \qquad
    \mathbf{M}_{-}=\mathbf{I}_C-\tau\mathbf{K},
    \qquad
    \mathbf{M}_{+}=\mathbf{I}_C+\tau\mathbf{K}.
\]
To show that \(\mathbf{M}_{-}\) is nonsingular, suppose
\(\mathbf{M}_{-}\mathbf{x}=\mathbf{0}\). Then
\(\mathbf{x}=\tau\mathbf{K}\mathbf{x}\), and premultiplication by
\(\mathbf{x}^{\top}\) yields
\[
    \|\mathbf{x}\|_2^2
    =
    \tau\mathbf{x}^{\top}\mathbf{K}\mathbf{x}.
\]
Because \(\mathbf{K}^{\top}=-\mathbf{K}\),
\[
    \mathbf{x}^{\top}\mathbf{K}\mathbf{x}
    =
    \bigl(\mathbf{x}^{\top}\mathbf{K}\mathbf{x}\bigr)^{\top}
    =
    \mathbf{x}^{\top}\mathbf{K}^{\top}\mathbf{x}
    =
    -\mathbf{x}^{\top}\mathbf{K}\mathbf{x},
\]
and therefore
\(\mathbf{x}^{\top}\mathbf{K}\mathbf{x}=0\). It follows that
\(\mathbf{x}=\mathbf{0}\), so the square matrix \(\mathbf{M}_{-}\) is
nonsingular. The same argument, with \(\tau\) replaced by \(-\tau\), proves
that \(\mathbf{M}_{+}\) is nonsingular. Thus the Cayley transform exists for
every \(\alpha\in\mathbb{R}\).

Next,
\(\mathbf{M}_{-}^{\top}=\mathbf{M}_{+}\) and
\(\mathbf{M}_{+}^{\top}=\mathbf{M}_{-}\). Starting from
\(\mathbf{C}_{\alpha}=\mathbf{M}_{-}^{-1}\mathbf{M}_{+}\), its transpose is
\begin{align*}
    \mathbf{C}_{\alpha}^{\top}
    &=
    \bigl(\mathbf{M}_{-}^{-1}\mathbf{M}_{+}\bigr)^{\top}=
    \mathbf{M}_{+}^{\top}
    \bigl(\mathbf{M}_{-}^{-1}\bigr)^{\top}\\
    &=
    \mathbf{M}_{-}
    \bigl(\mathbf{M}_{-}^{\top}\bigr)^{-1}=
    \mathbf{M}_{-}\mathbf{M}_{+}^{-1}.
\end{align*}
The third line uses the exact identity
\((\mathbf{M}_{-}^{-1})^{\top}
=(\mathbf{M}_{-}^{\top})^{-1}\); this is the transpose-of-inverse step.
Furthermore,
\[
\mathbf{M}_{-}\mathbf{M}_{+}
=
(\mathbf{I}_C-\tau\mathbf{K})
(\mathbf{I}_C+\tau\mathbf{K})
=
\mathbf{I}_C-\tau^2\mathbf{K}^2
=
\mathbf{M}_{+}\mathbf{M}_{-}.
\]
Hence \(\mathbf{M}_{-}\) commutes with \(\mathbf{M}_{+}^{-1}\), and
\[
    \mathbf{C}_{\alpha}^{\top}
    =
    \mathbf{M}_{-}\mathbf{M}_{+}^{-1}
    =
    \mathbf{M}_{+}^{-1}\mathbf{M}_{-}
    =
    \mathbf{C}_{\alpha}^{-1}.
\]
Consequently,
\(\mathbf{C}_{\alpha}^{\top}\mathbf{C}_{\alpha}=\mathbf{I}_C\), so
\(\mathbf{C}_{\alpha}\) is orthogonal. 
  Using \(\mathbf{W}_{\mathrm{ref}}=\mathbf{C}_{\alpha}\mathbf{W}\) now gives
\begin{align*}
    \mathbf{W}_{\mathrm{ref}}^{\top}\mathbf{W}_{\mathrm{ref}}
    &=
    \mathbf{W}^{\top}
    \mathbf{C}_{\alpha}^{\top}
    \mathbf{C}_{\alpha}\mathbf{W}=
    \mathbf{W}^{\top}\mathbf{W},
\end{align*}
which proves \eqref{eq:agpsr-T5}. Its entries are the pairwise column inner
products. Its eigenvalues are the squared singular values; hence the remaining
rank, norm, and two-norm condition-number claims follow immediately. This
proves part (iii).
\renewcommand{\qedsymbol}{}
\end{proof}

\setcounter{equation}{0}
\renewcommand{\theequation}{B\arabic{equation}}

\begin{proof}[Proof of Theorem \ref{property of COGC}]
We first record the required matrix and tensor identities, and then prove parts
(i)-(iv) separately.

\smallskip
\noindent First, for every matrix $\mathbf{P}_i$, the Moore--Penrose identities imply
\[
    (\mathbf{P}_i^{\dagger}\mathbf{P}_i)^{\top}
    =
    \mathbf{P}_i^{\dagger}\mathbf{P}_i,
    \qquad
    (\mathbf{P}_i^{\dagger}\mathbf{P}_i)^2
    =
    \mathbf{P}_i^{\dagger}\mathbf{P}_i.
\]
Thus each $\boldsymbol{\Pi}_i$ is an orthogonal projector onto
$\operatorname{range}(\mathbf{P}_i^{\top})$, and
$\mathbf{I}-\boldsymbol{\Pi}_i$, with the identity chosen in the
corresponding dimension, projects onto $\ker(\mathbf{P}_i)$. In particular,
\begin{equation}
    \boldsymbol{\Pi}_3\mathbf{N}_3=0,
    \qquad
    \mathbf{P}_3\mathbf{N}_3=0,
    \qquad
    \mathbf{P}_i\boldsymbol{\Pi}_i=\mathbf{P}_i.
    \label{eq:cogc-T17}
\end{equation}
No rank assumption is used in these identities.

For compatible matrices, mode products obey
\begin{equation}
    (\mathcal{T}\times_n\mathbf{A})\times_n\mathbf{B}
    =
    \mathcal{T}\times_n(\mathbf{B}\mathbf{A}),
    \label{eq:cogc-T18}
\end{equation}
and products on distinct modes commute. Therefore all spatial mode-1/mode-2
projections commute with the spectral mode-3 projections. These facts will
justify the rearrangements below.

\smallskip
\noindent Next, from \ref{equation:1} and \eqref{eq:cogc-residuals}, 
\begin{equation}
    \mathcal{E}_Y
    =
    \mathcal{E}_0\times_1\mathbf{P}_1\times_2\mathbf{P}_2,
    \qquad
    \mathcal{E}_Z
    =
    \mathcal{E}_0\times_3\mathbf{P}_3.
    \label{eq:cogc-T19}
\end{equation}
Substituting the second identity into \eqref{eq:cogc-corrections} and applying
\eqref{eq:cogc-T18} yields
\[
    \Delta\mathcal{X}_Z
    =
    (\mathcal{E}_0\times_3\mathbf{P}_3)
    \times_3\mathbf{P}_3^{\dagger}
    =
    \mathcal{E}_0\times_3
    (\mathbf{P}_3^{\dagger}\mathbf{P}_3)
    =
    \mathcal{E}_0\times_3\boldsymbol{\Pi}_3.
\]
Similarly, substituting the first identity into \eqref{eq:cogc-corrections}, commuting
distinct modes, and composing products on the same mode gives
\begin{align*}
    \Delta\mathcal{X}_Y
    &=
    \mathcal{E}_0
    \times_1(\mathbf{P}_1^{\dagger}\mathbf{P}_1)
    \times_2(\mathbf{P}_2^{\dagger}\mathbf{P}_2)
    \times_3\mathbf{N}_3=
    \mathcal{E}_{0,s}\times_3\mathbf{N}_3.
\end{align*}
This proves part (i).

\smallskip
\noindent Moreover, by part (i), every spectral fiber of
$\Delta\mathcal{X}_Z$ belongs to
$\operatorname{range}(\boldsymbol{\Pi}_3)
=\operatorname{range}(\mathbf{P}_3^{\top})$, whereas every spectral fiber of
$\Delta\mathcal{X}_Y$ belongs to
$\operatorname{range}(\mathbf{N}_3)=\ker(\mathbf{P}_3)$. These two subspaces
are orthogonal because $\boldsymbol{\Pi}_3\mathbf{N}_3=0$. The Frobenius inner
product is the sum of the Euclidean inner products of corresponding spectral
fibers, so
$\langle\Delta\mathcal{X}_Y,\Delta\mathcal{X}_Z\rangle_F=0$.

Spatial degradation takes linear combinations of spectral fibers but does not
mix their spectral coordinates. Linear combinations of fibers in
$\ker(\mathbf{P}_3)$ remain in that subspace, and the same is true for
$\operatorname{range}(\mathbf{P}_3^{\top})$. Hence the two spatially degraded
tensors are still Frobenius-orthogonal. Finally, by \eqref{eq:cogc-T18} and
\eqref{eq:cogc-T17},
\[
    \Delta\mathcal{X}_Y\times_3\mathbf{P}_3
    =
    \mathcal{E}_{0,s}\times_3(\mathbf{P}_3\mathbf{N}_3)
    =0.
\]
Thus the LR-HSI branch is invisible to the HR-MSI sensor and cannot disturb
its calibration. This proves part (ii).

\smallskip
\noindent Subsequently, since
$\boldsymbol{\Pi}_3+\mathbf{N}_3=\mathbf{I}_C$, part (i) gives
\begin{align*}
    \mathcal{E}_{\mathrm{rem}}
    &=
    \mathcal{E}_0-\Delta\mathcal{X}_Z-\Delta\mathcal{X}_Y=
    \mathcal{E}_0\times_3\mathbf{N}_3
    -\mathcal{E}_{0,s}\times_3\mathbf{N}_3\\
    &=
    (\mathcal{E}_0-\mathcal{E}_{0,s})\times_3\mathbf{N}_3,
\end{align*}
which proves \eqref{eq:cogc-T14}. Its spectral observation is zero because
$\mathbf{P}_3\mathbf{N}_3=0$. Its spatial observation is also zero, because
$\mathbf{P}_i\boldsymbol{\Pi}_i=\mathbf{P}_i$ gives
\begin{align*}
    &(\mathcal{E}_0-\mathcal{E}_{0,s})
    \times_1\mathbf{P}_1\times_2\mathbf{P}_2\\
    &\quad=
    \mathcal{E}_0\times_1\mathbf{P}_1\times_2\mathbf{P}_2
    -
    \mathcal{E}_0
    \times_1(\mathbf{P}_1\boldsymbol{\Pi}_1)
    \times_2(\mathbf{P}_2\boldsymbol{\Pi}_2)
    =0.
\end{align*}
It remains to establish pairwise orthogonality. The component
$\Delta\mathcal{X}_Z$ is orthogonal to both remaining components because it
lies in the spectral row space, whereas both $\Delta\mathcal{X}_Y$ and
$\mathcal{E}_{\mathrm{rem}}$ lie in the spectral null space. To compare the
latter two, set $\mathcal{F}=\mathcal{E}_0\times_3\mathbf{N}_3$. Then
\[
    \Delta\mathcal{X}_Y
    =
    \mathcal{F}\times_1\boldsymbol{\Pi}_1\times_2\boldsymbol{\Pi}_2,
    \qquad
    \mathcal{E}_{\mathrm{rem}}
    =
    \mathcal{F}-\Delta\mathcal{X}_Y.
\]
The map
$\mathcal{T}\mapsto
\mathcal{T}\times_1\boldsymbol{\Pi}_1\times_2\boldsymbol{\Pi}_2$ is an
orthogonal projector: under vectorization its matrix is a Kronecker product
of the symmetric idempotent matrices $\boldsymbol{\Pi}_1$,
$\boldsymbol{\Pi}_2$, and an identity matrix. Therefore its image is
orthogonal to its residual, which proves
$\langle\Delta\mathcal{X}_Y,\mathcal{E}_{\mathrm{rem}}\rangle_F=0$. The norm
identity \eqref{eq:cogc-T15} now follows by applying the Pythagorean theorem to
the three mutually orthogonal components. This proves part (iii).

\smallskip
\noindent Furthermore, for the spectral observation, part (ii) and
$\mathbf{P}_3\boldsymbol{\Pi}_3=\mathbf{P}_3$ yield
\begin{align*}
    \widehat{\mathcal{X}}\times_3\mathbf{P}_3
    &=
    \mathcal{X}_0\times_3\mathbf{P}_3
    +\Delta\mathcal{X}_Z\times_3\mathbf{P}_3\\
    &=
    \mathcal{X}_0\times_3\mathbf{P}_3
    +\mathcal{E}_0\times_3\mathbf{P}_3
    =\mathcal{Z}.
\end{align*}
For the spatial observation, part (i), commutation of distinct modes, and
$\mathbf{P}_i\boldsymbol{\Pi}_i=\mathbf{P}_i$ give
\begin{align*}
    &(\Delta\mathcal{X}_Z+\Delta\mathcal{X}_Y)
    \times_1\mathbf{P}_1\times_2\mathbf{P}_2\\
    &\quad=
    \mathcal{E}_Y\times_3\boldsymbol{\Pi}_3
    +\mathcal{E}_Y\times_3\mathbf{N}_3
    =\mathcal{E}_Y.
\end{align*}
Adding the Stage-1 spatial observation therefore gives
$\widehat{\mathcal{X}}\times_1\mathbf{P}_1\times_2\mathbf{P}_2=\mathcal{Y}$.
Hence $\widehat{\mathcal{X}}$ is feasible for \eqref{eq:cogc-T16}.

\smallskip
\noindent Finally, define the joint null space by
\[
    \mathbb{V}_0
    =
    \left\{
        \mathcal{H}:
        \mathcal{H}\times_1\mathbf{P}_1\times_2\mathbf{P}_2=0,
        \ \mathcal{H}\times_3\mathbf{P}_3=0
    \right\}.
\]
The map $\mathcal{T}\mapsto\mathcal{T}\times_3\mathbf{N}_3$ is the
orthogonal projector onto the spectral null space. The map
\[
    \mathcal{T}
    \mapsto
    \mathcal{T}
    -\mathcal{T}\times_1\boldsymbol{\Pi}_1\times_2\boldsymbol{\Pi}_2
\]
is the orthogonal projector onto the null space of the spatial degradation.
Indeed, after vectorizing each spectral band, the spatial degradation is
represented by $\mathbf{P}_2\otimes\mathbf{P}_1$, whose row-space projector is
\[
    (\mathbf{P}_2\otimes\mathbf{P}_1)^{\dagger}
    (\mathbf{P}_2\otimes\mathbf{P}_1)
    =
    \boldsymbol{\Pi}_2\otimes\boldsymbol{\Pi}_1.
\]
The two null-space projectors act on different modes and commute. The product
of two commuting orthogonal projectors is the orthogonal projector onto the
intersection of their ranges. By \eqref{eq:cogc-T14}, their product maps
$\mathcal{E}_0$ exactly to $\mathcal{E}_{\mathrm{rem}}$. Hence
\begin{equation}
\mathcal{E}_{\mathrm{rem}}
=
\operatorname{proj}_{\mathbb{V}_0}(\mathcal{E}_0),\;
\Delta\mathcal{X}_Y+\Delta\mathcal{X}_Z
=
\mathcal{E}_0-\mathcal{E}_{\mathrm{rem}}
\in\mathbb{V}_0^{\perp}.
\label{eq:cogc-T20}
\end{equation}
Now write any feasible candidate as
$\widetilde{\mathcal{X}}=\mathcal{X}_0+\Delta\mathcal{X}$. Since both
$\widetilde{\mathcal{X}}$ and the true $\mathcal{X}$ satisfy the two
observation equations, $\Delta\mathcal{X}-\mathcal{E}_0\in\mathbb{V}_0$.
Thus $\Delta\mathcal{X}=\mathcal{E}_0+\mathcal{H}$ for some
$\mathcal{H}\in\mathbb{V}_0$. Using \eqref{eq:cogc-T20},
\[
    \Delta\mathcal{X}
    =
    (\Delta\mathcal{X}_Y+\Delta\mathcal{X}_Z)
    +(\mathcal{E}_{\mathrm{rem}}+\mathcal{H}),
\]
where the first term lies in $\mathbb{V}_0^{\perp}$ and the second in
$\mathbb{V}_0$. Therefore
\[
    \|\Delta\mathcal{X}\|_F^2
    =
    \|\Delta\mathcal{X}_Y+\Delta\mathcal{X}_Z\|_F^2
    +
    \|\mathcal{E}_{\mathrm{rem}}+\mathcal{H}\|_F^2.
\]
The right-hand side is minimized uniquely when
$\mathcal{H}=-\mathcal{E}_{\mathrm{rem}}$, in which case
$\Delta\mathcal{X}=\Delta\mathcal{X}_Y+\Delta\mathcal{X}_Z$ and
$\widetilde{\mathcal{X}}=\widehat{\mathcal{X}}$. This proves the
minimum-change property and its uniqueness, completing part (iv) and the
proof.
\renewcommand{\qedsymbol}{}
\end{proof}

\bibliography{refs}
\end{document}